\documentclass[a4paper,11pt]{article}
\usepackage[margin=1truein]{geometry}
\usepackage[hyphens]{url}
\usepackage{graphicx}
\urlstyle{rm}

\usepackage[round]{natbib}
\usepackage{caption}
\frenchspacing

\usepackage{amsmath,amsthm,amssymb,mathtools}
\usepackage{thmtools,thm-restate}
\theoremstyle{plain}
\newtheorem{theorem}{Theorem}
\newtheorem{lemma}[theorem]{Lemma}
\newtheorem{proposition}[theorem]{Proposition}
\newtheorem{observation}[theorem]{Observation}
\newtheorem{condition}[theorem]{Condition}
\theoremstyle{definition}
\newtheorem{definition}[theorem]{Definition}
\newtheorem{problem}[theorem]{Problem}
\theoremstyle{remark}

\newtheorem{example}[theorem]{Example}

\usepackage{booktabs}
\usepackage{comment}
\usepackage{authblk}

\usepackage{xcolor}
\usepackage{etoolbox}
\usepackage[linesnumbered,ruled,vlined]{algorithm2e}
\DontPrintSemicolon
\SetInd{0.25em}{0.5em}
\SetVlineSkip{0.0pt}
\SetKwInOut{Input}{Input}
\SetKwInOut{Output}{Output}
\SetKwProg{Func}{Function}{:}{}
\RestyleAlgo{ruled}
\colorlet{commentgray}{black!60!}

\SetCommentSty{mycommfont}
\SetAlCapHSkip{1pt}
\makeatletter
\patchcmd{\@algocf@start}{-1.5em}{0em}{}{} 
\makeatother
\usepackage{caption}

\usepackage{xspace}
\newcommand{\mathtrue}{\textit{true}}
\newcommand{\mathfalse}{\textit{false}}
\newcommand{\scope}{\text{\rm Var}}

\newcommand{\prob}{\text{\rm Pr}}
\newcommand{\vtree}{\mathsf{T}}
\newcommand{\vnode}{\mathsf{v}}
\newcommand{\wnode}{\mathsf{w}}
\newcommand{\pnode}{\mathsf{p}}
\newcommand{\dnode}{\mathsf{d}}
\newcommand{\lcanode}{\mathsf{anc}}
\newcommand{\covfunc}{\mathtt{COV}}
\newcommand{\expfunc}{\mathtt{EXP}}
\newcommand{\adjcovfunc}{\mathtt{ADJCOV}}
\newcommand{\adjexpfunc}{\mathtt{ADJEXP}}
\newcommand{\covvar}{\mathtt{c}}
\newcommand{\expvar}{\mathtt{e}}
\newcommand{\vvarvar}{\mathtt{vv}}
\newcommand{\vexpvar}{\mathtt{ev}}
\newcommand{\adjvarvar}{\mathtt{va}}
\newcommand{\adjexpvar}{\mathtt{ea}}
\newcommand{\resvar}{\mathtt{r}}
\newcommand{\lca}{\mathsf{LCA}}
\newcommand{\vset}{\mathcal{V}}
\newcommand{\bnset}{\mathcal{X}}
\newcommand{\parset}{\mathcal{U}}
\newcommand{\parval}{\mathbf{u}}
\newcommand{\evidence}{\mathbf{x}}
\newcommand{\condit}{\mathbf{c}}
\newcommand{\labelset}{\mathcal{L}}
\newcommand{\labelval}{\mathbf{l}}
\newcommand{\modelset}{\mathcal{A}}
\newcommand{\expect}[1]{\mathrm{E}[ #1 ]}
\newcommand{\variance}[1]{\mathrm{V}[ #1 ]}
\newcommand{\covariance}[2]{\mathrm{Cov}[ #1 , #2 ]}
\newcommand{\VCquery}{\textbf{VC}\xspace}
\newcommand{\CVCquery}{\textbf{CVC}\xspace}
\newcommand{\CTquery}{\textbf{CT}\xspace}
\newcommand{\SEquery}{\textbf{SE}\xspace}
\newcommand{\order}[1]{O( #1 )}

\usepackage{hyperref}
\hypersetup{
  colorlinks=true,
  linkcolor={red!35!black},
  citecolor={green!35!black},
  urlcolor={blue!50!black}
}

\usepackage{datetime}
\date{\vspace{-2.5\baselineskip}}
\newcommand{\email}[1]{\href{mailto:#1}{\nolinkurl{#1}}}

\title{Variance Computation for Weighted Model Counting\\with Knowledge Compilation Approach\footnote{Full version of the paper accepted for AAAI 2026. URL of the published version is: \url{https://doi.org/10.1609/aaai.v40i23.39007}.}}
\author{Kengo Nakamura}
\author{Masaaki Nishino}
\author{Norihito Yasuda}
\affil{Communication Science Laboratories, NTT, Inc., Kyoto, Japan\\ \{kengo.nakamura,masaaki.nishino,norihito.yasuda\}@ntt.com}

\begin{document}

\maketitle

\begin{abstract}
One of the most important queries in knowledge compilation is weighted model counting (WMC), which has been applied to probabilistic inference on various models, such as Bayesian networks.
In practical situations on inference tasks, the model's parameters have uncertainty because they are often learned from data, and thus we want to compute the degree of uncertainty in the inference outcome.
One possible approach is to regard the inference outcome as a random variable by introducing distributions for the parameters and evaluate the \emph{variance} of the outcome.
Unfortunately, the tractability of computing such a variance is hardly known.
Motivated by this, we consider the problem of computing the variance of WMC and investigate this problem's tractability.
First, we derive a polynomial time algorithm to evaluate the WMC variance when the input is given as a structured d-DNNF.
Second, we prove the hardness of this problem for structured DNNFs, d-DNNFs, and FBDDs, which is intriguing because the latter two allow polynomial time WMC algorithms.
Finally, we show an application that measures the uncertainty in the inference of Bayesian networks.
We empirically show that our algorithm can evaluate the variance of the marginal probability on real-world Bayesian networks and analyze the impact of the variances of parameters on the variance of the marginal.
\end{abstract}

\section{Introduction}
\emph{Knowledge compilation} is a technique that represents a propositional formula, a.k.a., a Boolean function, as a compressed and tractable form.
Once Boolean functions are compiled into certain representations, we can solve various queries in polynomial time in the sizes of the representations~\citep{darwiche02kcmap}.
Among various queries, the most prominent one is \emph{weighted model counting} (WMC), which is the problem of counting the (weighted) number of satisfying assignments of a Boolean function.
WMC has been applied to various probabilistic inference tasks on, e.g., Bayesian networks~\citep{chavira08enc,dilkas21wmc}, factor graphs~\citep{choi13factor}, and probabilistic programming~\citep{fierens11plp,holtzen20pp}.

In practical situations, the parameters of such probabilistic models are often obtained by learning from data~\citep{cozman00credal,heckerman08bntutorial}.
When we lack sufficient data, they may suffer from uncertainty.
Perhaps such an uncertainty leads to unreliable inference results.
However, ordinal inference methods (including methods using WMC) disregard uncertainty in parameters.
Thus, we want to compute the degree of uncertainty in the inference outcome when the parameters are imprecise.
A Bayesian statistical approach regards the inference outcome as a random variable by considering the distributions for the parameters and computes the \emph{variance} of the outcome.
For example, for the Bayesian network in Fig.~\ref{fig:smallbn}(a), we consider the variance of the outcome when parameters follow distributions, as in Fig.~\ref{fig:smallbn}(c).
By introducing the expectation and variance of parameters, the outcome's expectation equals the marginal, and we can also obtain its variance.
The computed variance affects the decision-making that depends on the inference outcome; when the computed variance is too large, we should regard the inference result as unreliable.
However, the tractability of computing the variance of the inference outcome remains unknown; although the variance computation is expected to be at least as difficult as the ordinal inference, we do not know the extent of its difficulty.

\begin{figure}[tb]
    \centering
    \includegraphics[keepaspectratio,scale=1.2]{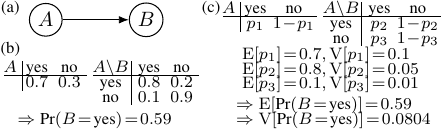}
    \caption{(a) A Bayesian network. (b) Example of ordinal inference, where parameters are fixed. (c) Example of our situation where parameters have variances.}
    \label{fig:smallbn}
\end{figure}

In the applications of inference, the WMC value typically equals the inference outcome.
Thus, motivated by the variance computation of the inference outcome, we consider the query of computing the \emph{variance of WMC value} when the weights associated with Boolean variables have variances.
As explained later, a previous study~\citep{nakamura22variance} treated the variance of WMC value with knowledge compilation in a special case of network analysis.
However, this work is specialized to network analysis and does not consider general WMC tasks.
Moreover, it only uses ordered binary decision diagrams (OBDDs)~\citep{bryant1986graph}, one of the most restricted representations in knowledge compilation.
Therefore, this work hardly reveals the tractability of variance computation including inference tasks.
Thus, we formalize variance computation query for the WMC of general Boolean function and investigate the tractability of it with various knowledge compilation representations.

Our contributions are three-fold.
First, we propose a polynomial time algorithm that computes the variance of WMC of a Boolean function represented as a structured d-DNNF~\citep{pipatsrisawat08structured}.
This result is meaningful since structured d-DNNFs subsume sentential decision diagrams (SDDs)~\citep{darwiche11sdd}, which have been widely used in many applications, as a subset.
Second, we prove that we cannot compute the WMC's variance in polynomial time unless P=NP when the Boolean function is represented as a structured DNNF, a d-DNNF~\citep{darwiche01ddnnf}, or an FBDD~\citep{gergov94fbdd}, all of which are strict supersets of structured d-DNNFs.
The results for d-DNNFs and FBDDs are interesting because the WMC itself can be computed in polynomial time for these representations.
Third, we present an application for the inference of Bayesian networks and show that the variance of the marginal probability can be obtained in polynomial time for a Bayesian network with a constant treewidth.
We also empirically demonstrate the tractability of the proposed algorithm with real-world Bayesian networks and showcase an example of uncertainty analysis on Bayesian networks with variance computation.
Particularly, we demonstrated that we can find parameters of a Bayesian network whose variances have greater impact on the variance of the marginal probability, a useful result for the additional learning of parameters that effectively reduce the uncertainty of the inference.


\section{Related Work}
Knowledge compilation is regarded as a key technique for tackling computationally difficult propositional reasoning tasks.
Thus, as well as the succinctness of representations, the tractability for various operations is the central research subject.
Knowledge compilation map~\citep{darwiche02kcmap}, which summarizes the succinctness and tractability of various representations, have been extended by subsequent studies.
For example, the tractability of standard operations has been studied for recently proposed representations~\citep{illner25wdnnf,onaka25stdasc} and
the tractability of the generalization of WMC such as algebraic model counting (AMC) and two-level AMC (2AMC) was recently investigated~\citep{kiesel22amc,wang24atlas}.
Our study broadens the application of knowledge compilation by proposing a new query related to probabilistic inference, which is a major application of knowledge compilation, and investigating this query's position on the knowledge compilation map.

In probabilistic inference, it is crucial to deal with uncertainty in parameters.
A typical approach to incorporate uncertainty is a fully Bayesian approach, where we regard every parameter as drawn from a distribution, as in the Introduction.
However, to the best of our knowledge, no study has considered the variance of the marginal in a Bayesian network with this approach.
Another line of research for incorporating uncertainty in Bayesian networks is credal networks~\citep{cozman00credal}, where imprecise probabilities are modeled as sets of distributions called credal sets.
Credal networks enable robust inferences by computing the bounds of the marginal probability when the parameters have fluctuated within given bounds.
However, marginal inference for credal networks is NP-hard even for networks with constant treewidth~\citep{decompos05complexity}.
In contrast, our approach can compute the variance of the marginal in polynomial time for networks with constant treewidth.

WMC has also been applied to the reliability analysis on communication networks where links are stochastically failed~\citep{duenas17reliability}.
For this purpose, Boolean function $f^\prime$, which indicates the connectivity in sub-networks, is considered and the reliability equals the WMC of $f^\prime$~\citep{hardy2007knr}.
\citet{nakamura22variance} proposed an algorithm that computes the variance of reliability in polynomial time in the size of the OBDD~\citep{bryant1986graph} representing $f^\prime$ when the existential probability of each link in the network has variance.
We extend their problem setting and algorithm to handle WMC's variance computation of a general Boolean function.
Moreover, we extended their algorithm to work on structured d-DNNFs, a strict superset of OBDDs; here, our algorithm's key technical difference is its management of variable sets and variable decompositions guided by a vtree, as described later.
As a byproduct, we can prove that the variance of network reliability on networks with constant treewidth can be computed in polynomial time.
This theoretically improves the previous result~\citep{nakamura22variance} stating that it can be computed in polynomial time for networks with constant \emph{pathwidth}, since the treewidth subsumes the pathwidth but not vice versa; details are in Appendix~\ref{app:network}.

There exist studies to represent a probability distribution of a random variable $X$ as a tractable circuit in a spirit of knowledge compilation:
probabilistic circuits~\citep{choi20pc} represent probability mass functions, while probabilistic generating circuits~\citep{zhang21pgc} and characteristic circuits~\citep{yu23character} represent probability generating and characteristic functions.
These circuits admit polytime moment computation, including the variance, of the random variable $X$ under certain structural restrictions.
In contrast, our work regards the \emph{probability} $\prob(X=a)$ as a random variable and computes the variance of it, where $X$ is a random variable appearing in, e.g., Bayesian networks.
To derive the variance of the inference outcome, our work is needed because we currently have no approach to compute the variance of the probability value seen as a random variable with probabilistic circuits.

\section{Preliminaries}
A \emph{Boolean function} takes a set of Boolean variables each valued $\mathtrue$ or $\mathfalse$ as an input and outputs either $\mathtrue$ or $\mathfalse$.
An \emph{assignment} $a$ on variable set $\vset$ is a mapping $\vset\rightarrow\{\mathtrue,\mathfalse\}$.
Assignment $a$ is called a \emph{model} of Boolean function $f$ if $f$ is evaluated to $\mathtrue$ under $a$.

  A rooted directed acyclic graph is called a \emph{negation normal form} (\emph{NNF}) if the leaf nodes are labeled with $\mathtrue$, $\mathfalse$, $x$, or $\neg x$, where $x$ is a Boolean variable, and the internal nodes are labeled with either $\wedge$ or $\vee$.
  The size of the NNF is defined as the number of arcs.
  For node $\alpha$ of an NNF, Boolean function $f_\alpha$ represented by $\alpha$ is defined as follows.
  For leaf node $\alpha$, $f_\alpha=\alpha$; $\mathtrue$ and $\mathfalse$ stand for identity functions that always evaluate to $\mathtrue$ and $\mathfalse$.
  For internal node $\alpha$, let $\alpha_1,\ldots,\alpha_k$ be the child nodes of $\alpha$.
  If $\alpha$ is a $\wedge$-node, $f_\alpha=\bigwedge_jf_{\alpha_j}$.
  If $\alpha$ is a $\vee$-node, $f_\alpha=\bigvee_jf_{\alpha_j}$.
  The Boolean function represented by an NNF is that represented by its root node.
We often abuse a symbol for NNF node $\alpha$ to represent the whole NNF rooted at $\alpha$.

Next, we define several restrictions on NNFs, which induce subsets of NNFs.
For NNF node $\alpha$, let $\scope(\alpha)$ be the set of Boolean variables that appear as the labels of the descendant nodes of $\alpha$, called the \emph{scope} of $\alpha$.
In the following, let $\alpha_1,\ldots,\alpha_k$ be the child nodes of internal node $\alpha$.
\begin{definition}
  An NNF is called \emph{decomposable} if every $\wedge$-node $\alpha$ satisfies $\scope(\alpha_i)\cap\scope(\alpha_j)=\emptyset$ for any $i\neq j$.
  An NNF is called \emph{deterministic} if every $\vee$-node $\alpha$ satisfies $f_{\alpha_i}\wedge f_{\alpha_j}=\mathfalse$ for any $i\neq j$.
  An NNF is called \emph{decision} if every $\vee$-node only appears in the form: $(x\wedge\alpha)\vee(\neg x\wedge\beta)$, where $x,\neg x$ are leaf nodes.
\end{definition}
A \emph{d-DNNF} is a decomposable and deterministic NNF, and \emph{FBDD} is a decomposable and decision NNF with the following additional restriction; for every $\vee$-node, $\alpha,\beta$ in the decision property must be either a leaf node or a $\vee$-node.
We also define structured decomposability as follows.
\begin{definition}
  A \emph{vtree} $\vtree$ on variable set $\vset$ is a rooted binary tree, where each leaf node is labeled with a Boolean variable in $\vset$ and each internal node $\vnode$ has exactly two child nodes $\vnode^l,\vnode^r$.
  Here, any Boolean variable $x\in\mathcal{V}$ must appear as a label exactly once.
  To distinguish them from NNF nodes, we call the nodes of a vtree a \emph{vnode}.
  The \emph{scope} $\scope(\vnode)$ of vnode $\vnode$ is the set of the labels of the descendants of $\vnode$.
  For NNF node $\alpha$, its \emph{decomposition vnode} $\dnode(\alpha)$ of vtree $\vtree$ is the deepest vnode $\vnode$ in $\vtree$ satisfying $\scope(\alpha)\subseteq\scope(\vnode)$. 
\end{definition}
\begin{definition}
  \label{def:structured}
  We say an NNF \emph{respects vtree $\vtree$} if every $\wedge$-node $\alpha$ has exactly two child nodes $\alpha^l,\alpha^r$ and they satisfy $\scope(\alpha^l)\subseteq\scope(\vnode^l)$ and $\scope(\alpha^r)\subseteq\scope(\vnode^r)$ for some vnode $\vnode$ of $\vtree$.
  An NNF is called \emph{structured decomposable} if it respects some vtree.
\end{definition}
A \emph{structured DNNF} (\emph{st-DNNF}) is a structured decomposable NNF.
A \emph{structured d-DNNF} (\emph{st-d-DNNF}) is a structured decomposable and deterministic NNF.

\begin{figure}
    \centering
    \includegraphics[keepaspectratio,scale=1.2]{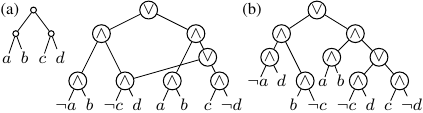}
    \caption{(a) A vtree and an st-d-DNNF. (b) A d-DNNF that is not structured decomposable.}
    \label{fig:stddnnf}
\end{figure}

\begin{example}
  \label{ex:figure}
  Let $f=(\neg a\wedge b\wedge\neg c\wedge d)\vee(a\wedge b\wedge\neg c\wedge d)\vee(a\wedge b\wedge c\wedge\neg d)$.
  Fig.~\ref{fig:stddnnf}(a) depicts an st-d-DNNF of $f$ and the respected vtree.
  Fig.~\ref{fig:stddnnf}(b) is a d-DNNF of $f$; however, it is not structured decomposable because the left child of the root decomposes the variables into $\{a,d\}$ and $\{b,c\}$ while the right child decomposes them into $\{a,b\}$ and $\{c,d\}$.
\end{example}

Here, we assume that, for any $\wedge$-node $\alpha$ of an st-d-DNNF with child nodes $\alpha^l,\alpha^r$, $\scope(\alpha^l)\neq\emptyset$ and $\scope(\alpha^r)\neq\emptyset$.
We can easily transform an st-d-DNNF to satisfy the above assumption.
If $\scope(\alpha^l)=\emptyset$, $f_{\alpha^l}$ is either $\mathtrue$ or $\mathfalse$.
When $f_{\alpha^l}=\mathtrue$, we can replace $\alpha$ with $\alpha^r$; i.e., we eliminate $\alpha$ and redirect the incoming arcs of $\alpha$ to $\alpha^r$.
Otherwise, we can replace $\alpha$ with $\mathfalse$.
We can perform the same transformation when $\scope(\alpha^r)=\emptyset$.
Under this assumption, the vnode $\vnode$ appeared in Definition~\ref{def:structured} is determined as $\vnode=\dnode(\alpha)$.
This can be proved as follows.
We have $\scope(\alpha)=\scope(\alpha^l)\cup\scope(\alpha^r)\subseteq\scope(\vnode^l)\cup\scope(\vnode^r)=\scope(\vnode)$.
Also, we have $\scope(\alpha)\nsubseteq\scope(\vnode^l)$ following from $\scope(\alpha)\setminus\scope(\vnode^l)=\scope(\alpha^r)\neq\emptyset$. Similarly, $\scope(\alpha)\nsubseteq\scope(\vnode^r)$.
Therefore, $\vnode$ is the deepest vnode such that $\scope(\alpha)\subseteq\scope(\vnode)$.

\section{Variance of Weighted Model Counting}
We first define the WMC.
We denote the set of models of $f$ on variable set $\vset$ by $\modelset_f^\vset$.
For each variable $x$ in variable set $\vset$, we assign \emph{positive weight} $P_x$ and \emph{negative weight} $N_x$.
Then we define the WMC $W_f^\vset$ of $f$ on variable set $\vset$ by
\begin{equation}
  W_f^\vset \coloneqq\sum_{a\in\modelset_f^\vset}W_a^\vset,\quad
  W_a^\vset\coloneqq\ \smashoperator{\prod_{\substack{x\in\mathcal{V}\\a(x)=\mathtrue
  }}}\ P_x\cdot\ \smashoperator{\prod_{\substack{x\in\mathcal{V}\\a(x)=\mathfalse}}}\ N_x. \label{eq:wmc}
\end{equation}
Note that the value of $W_f^\vset$ changes by modifying $\vset$.
Thus, when considering the WMC, we must care about the variable set.
If $\vset$ is clear from the context, we omit the superscripts.

In existing studies, $P_x$ and $N_x$ are given as real values without uncertainty.
In this paper, for every variable $x\in\mathcal{V}$, $P_x$ and $N_x$ are regarded as random variables with bounded expectation and variance.
Then, WMC $W_f$, defined by (\ref{eq:wmc}), is also a random variable with bounded expectation and variance.
This virtually considers the variance of the inference outcome in the applications since the WMC value typically equals the outcome, as described in Introduction.

We assume that $(P_x,N_x)$ and $(P_y,N_y)$ are independent for $x\neq y$, while $P_x$ and $N_x$ for the same $x$ are not necessarily independent.
Then the expectation $\expect{W_f}$ is equivalent to the ordinal WMC:
$\expect{W_f^\vset}=\sum_{a\in\modelset_f^\vset}\expect{W_a^\vset}=\sum_{a\in\modelset_f^\vset}\prod_{x\in\vset:a(x)=\mathtrue}\expect{P_x}\cdot\prod_{x\in\vset:a(x)=\mathfalse}\expect{N_x}$.
This assumption is reasonable for some applications, and later we slightly relax it for a specific application; see the Application section as well as Appendices~\ref{app:multivalued} and \ref{app:network}.
Now we formally define the variance computation queries.
\begin{problem}
  \label{prob:variance}
  We are given expectations $\mu_{P_x},\mu_{N_x}$ and variances $\sigma_{P_x}^2,\sigma_{N_x}^2$ of $P_x,N_x$ and covariance $\sigma_{P_xN_x}$ of $P_x$ and $N_x$ for every $x\in\vset$.
  We define \emph{variance computation} query \VCquery as the computation of variance $\variance{W_f^\vset}$ of WMC of input Boolean function $f$.
  As a related one, we define \emph{covariance computation} query \CVCquery as the computation of covariance $\covariance{W_f^\vset}{W_g^\vset}$ of WMCs of input Boolean functions $f,g$.
\end{problem}

We have $\covariance{W_f}{W_g}=\sum_{a\in\modelset_f}\sum_{b\in\modelset_g}\covariance{W_a}{W_b}$, each term of which can be computed in $\order{|\vset|}$ time. Thus, if the models of Boolean functions are explicitly enumerated, we can compute $\variance{W_f}=\covariance{W_f}{W_f}$ in $\order{|\vset||\modelset_f|^2}$ time and $\covariance{W_f}{W_g}$ in $\order{|\vset||\modelset_f||\modelset_g|}$ time.
\begin{example}
  \label{ex:variance}
  Let $f$ be the Boolean function in Example~\ref{ex:figure}.
  Let $\mu_{P_x}=\mu$, $\mu_{N_x}=1-\mu$, $\sigma_{P_x}^2=\sigma_{N_x}^2=\sigma^2$, and $\sigma_{P_xN_x}=-\sigma^2$ for any $x\in\vset=\{a,b,c,d\}$.
  Then $W_f=N_aP_bN_cP_d+P_aP_bN_cP_d+P_aP_bP_cN_d$, and thus $\expect{W_f}=\mu^2-\mu^4$.
  Similarly, $\variance{W_f}=(2\mu^2-2\mu^3-2\mu^4+4\mu^6)\sigma^2+(1-2\mu+2\mu^2+6\mu^4)\sigma^4+(2+4\mu^2)\sigma^6+\sigma^8$.
\end{example}

However, since $|\modelset_f|$ and $|\modelset_g|$ are generally exponential in $|\vset|$, this solution causes a prohibitively long running time.
Therefore, we consider how to solve these queries when Boolean functions are represented as NNFs.

\section{Tractability Results}
The goal of this section is to prove the following theorem.
\begin{theorem}
  \label{thm:tractable}
  When $f,g$ are given as st-d-DNNFs $\alpha,\beta$ respecting the same vtree, \CVCquery can be solved in $\order{|\alpha||\beta|+|\vset|^2}$ time.
  Thus, when $f$ is given as an st-d-DNNF $\alpha$, \VCquery can be solved in $\order{|\alpha|^2+|\vset|^2}$ time.
\end{theorem}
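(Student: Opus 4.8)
The plan is to convert the exponential double sum over model pairs, which the excerpt already records as $\covariance{W_f^\vset}{W_g^\vset}=\sum_{a\in\modelset_f}\sum_{b\in\modelset_g}\covariance{W_a^\vset}{W_b^\vset}$, into a single synchronized bottom-up pass over $\alpha$ and $\beta$, exploiting that they respect one common vtree $\vtree$. First I would rewrite each summand as
$$\covariance{W_a^\vset}{W_b^\vset}=\expect{W_a^\vset W_b^\vset}-\expect{W_a^\vset}\expect{W_b^\vset}.$$
Since $(P_x,N_x)$ and $(P_y,N_y)$ are independent for $x\neq y$, both $\expect{W_aW_b}$ and $\expect{W_a}\expect{W_b}$ factorize over $x\in\vset$, and the factor of $x$ depends only on $(a(x),b(x))$: the second-moment part uses one of $\expect{P_x^2}$, $\expect{N_x^2}$, $\expect{P_xN_x}$, and the expectation part uses $\mu_{P_x}$ or $\mu_{N_x}$. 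This per-variable multiplicativity is exactly what a circuit traversal can accumulate.

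Second, guided by $\vtree$, I would attach to each relevant pair of nodes $(\alpha',\beta')$ a constant-size tuple of DP values: two expectation sums $\sum_{a\in\modelset_{\alpha'}}\expect{W_a}$ and $\sum_{b\in\modelset_{\beta'}}\expect{W_b}$ (the $\expfunc$/$\adjexpfunc$ values), and one covariance sum $\sum_{a,b}\covariance{W_a}{W_b}$ (the $\covfunc$/$\adjcovfunc$ values), each taken over a common reference scope dictated by the decomposition vnode. Leaf pairs are immediate. At a pair of $\wedge$-nodes, structured decomposability forces the left children into $\scope(\vnode^l)$ and the right children into $\scope(\vnode^r)$ for the shared vnode $\vnode=\dnode(\alpha')=\dnode(\beta')$; as the two variable blocks are independent, $\covariance{W_a}{W_b}$ for $W_a=W_a^lW_a^r$ and $W_b=W_b^lW_b^r$ equals $\expect{W_a^lW_b^l}\expect{W_a^rW_b^r}-\expect{W_a^l}\expect{W_b^l}\expect{W_a^r}\expect{W_b^r}$, and substituting $\expect{W^lW^{\prime l}}=\covariance{W^l}{W^{\prime l}}+\expect{W^l}\expect{W^{\prime l}}$ turns the parent tuple into a fixed bilinear combination of the two child tuples. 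This is the recursion at $\wedge$-pairs.

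Third, at a pair of $\vee$-nodes I would invoke determinism. Because each $\vee$-node is deterministic, its model set is the disjoint union of its children's, so both expectation sums split additively over the children and the parent covariance sum is the sum over child pairs of their covariance values, with no overcounting. Determinism is indispensable here: for a merely decomposable $\vee$-node the children's model sets could overlap, the additive split would double-count, and the same algorithm would be wrong --- this is exactly the slack that the later hardness results exploit. The answer $\covariance{W_f^\vset}{W_g^\vset}$ is then read off as the covariance value at the root pair (the expectation values at the root coincide with the ordinary WMCs $\expect{W_f}$ and $\expect{W_g}$ and are obtained as byproducts), so no final correction is needed.

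The step I expect to be the main obstacle, and the real departure from the OBDD algorithm of \citet{nakamura22variance}, is the management of variable sets. The scopes $\scope(\alpha')$ and $\scope(\beta')$ need not agree with each other or with $\scope(\vnode)$, so any variable lying in the reference scope but absent from a node's scope is \emph{free} and must be summed over: one that is free in both circuits contributes a factor $\expect{(P_x+N_x)^2}$ to the second moment, and one free in a single circuit contributes a mixed factor. To keep each combination $\order{1}$, I would precompute, by a bottom-up pass over $\vtree$, the free-variable correction factors associated with the vnodes --- these are precisely what the adjusted values $\adjexpfunc$/$\adjcovfunc$ encode --- so that whenever two subtuples with mismatched scopes meet, the missing block is filled in constant time; this preprocessing is what contributes the additive $\order{|\vset|^2}$ term. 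Restricting attention to node pairs sharing a decomposition vnode bounds the number of processed pairs by the product-circuit size $\order{|\alpha||\beta|}$ with constant work apiece, giving the \CVCquery bound; finally, taking $\beta=\alpha$ and using $\variance{W_f}=\covariance{W_f}{W_f}$ specializes this to the \VCquery bound $\order{|\alpha|^2+|\vset|^2}$.
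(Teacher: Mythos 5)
Your proposal is correct and is essentially the paper's own proof: your $\wedge$-pair bilinear recursion is Lemma~\ref{lem:anddecomp} (via $\expect{W^lW^{\prime l}}=\covariance{W^l}{W^{\prime l}}+\expect{W^l}\expect{W^{\prime l}}$, which recovers Eq.~(\ref{eq:prodcov})), your determinism-based additive split is Lemma~\ref{lem:ordecomp}, your vnode-indexed free-variable tables with $\order{|\vset|^2}$ preprocessing are exactly the $\adjexpfunc$/$\adjcovfunc$ machinery of Appendix~\ref{app:preprocessing}, and your memoized pair count gives the same $\order{|\alpha||\beta|}$ bound as Algorithm~\ref{alg:prod}. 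The one loose spot is immaterial: a block of variables covered by one circuit but free in the other cannot be absorbed by a precomputed vnode factor alone (the needed quantity $\covariance{W_{f_{\alpha^\prime}}}{W_{\mathtrue}}$ depends on the node, not just its scope), but this is handled within your own DP by recursing on pairs $(\alpha^\prime,\mathtrue)$ --- the paper's case~(I), Lemma~\ref{lem:norelation} --- at no asymptotic cost.
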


We first introduce some fundamental formulas that are frequently used.
Given random variables $A,B,C,X,Y$, suppose that $(A,B)$ and $(X,Y)$ are independent. Then,
\begin{align}
  \covariance{A+B}{C} & =\covariance{A}{C}+\covariance{B}{C}, \label{eq:sumcov}\\
  \covariance{AX}{BY} & =\covariance{A}{B}\covariance{X}{Y} 
  + \covariance{A}{B}\expect{X}\expect{Y} 
  + \expect{A}\expect{B}\covariance{X}{Y}. \label{eq:prodcov}
\end{align}
Eq. (\ref{eq:sumcov}) is a well-known formula derived from the linearity of covariances.
Eq. (\ref{eq:prodcov}) is analogous to the formula for the variance of the product of independent random variables; the proof of (\ref{eq:prodcov}) can be found in~\citep{nakamura22variance}.

Using these formulas, we design an algorithm to compute $\covariance{W_{f_\alpha}}{W_{f_\beta}}$ for given st-d-DNNFs $\alpha,\beta$ respecting the same vtree.
The proposed algorithm computes $\covariance{W_{f_\alpha}}{W_{f_\beta}}$ by recursively decomposing it into the sums and products of $\covariance{W_{f_{\alpha^\prime}}}{W_{f_{\beta^\prime}}}$s, where $\alpha^\prime,\beta^\prime$ are the child nodes of $\alpha,\beta$.
To avoid redundant recursive calls, the value of $\covariance{W_{f_{\alpha^\prime}}}{W_{f_{\beta^\prime}}}$ is cached once it is computed.
However, since WMC value $W_f^\vset$ is altered by changing variable set $\vset$, we must track the variable set in decomposing the covariance.
We manage the variable set by fully using the vtree.
More specifically, let $\lcanode=\lca(\dnode(\alpha),\dnode(\beta))$ be the \emph{lowest common ancestor} (\emph{LCA}) of $\dnode(\alpha)$ and $\dnode(\beta)$, which is the deepest vnode $\vnode$ such that it is the ancestor of both $\dnode(\alpha)$ and $\dnode(\beta)$.
Our algorithm recursively computes $\covariance{W_{f_\alpha}^\vset}{W_{f_\beta}^\vset}$, where $\vset=\scope(\lcanode)$.
In the following, for convenience, we define $\dnode(\mathtrue)=\dnode(\mathfalse)=\bot$, which is an imaginary vnode satisfying $\scope(\bot)=\emptyset$, $\lca(\bot,\bot)=\bot$, and $\lca(\vnode,\bot)=\vnode$ for any other vnode $\vnode$.
In other words, $\bot$ is a vnode that is a descendant of any other vnodes.

\subsection{Decomposition Lemmas}
To derive the algorithm, we must determine how the covariance is decomposed into the covariances of child nodes.
We derive decomposition formulas by conducting a comprehensive case analysis:
(I) $\dnode(\alpha)$ and $\dnode(\beta)$ have no ancestor-descendant relation, (II) $\dnode(\alpha)$ is an ancestor of $\dnode(\beta)$ and $\alpha$ is a $\vee$-node, and (III) $\dnode(\alpha)$ is an ancestor of $\dnode(\beta)$ and $\alpha$ is a $\wedge$-node.
Here, (II) and (III) allow $\dnode(\alpha)=\dnode(\beta)$.
Note that when $\dnode(\beta)$ is an ancestor of $\dnode(\alpha)$, we can swap $\alpha$ and $\beta$ to satisfy (II) or (III).
In the following, we derive decomposition formulas for each case.

In case (I), i.e., both $\lcanode\neq\dnode(\alpha)$ and $\lcanode\neq\dnode(\beta)$ hold, the following decomposition holds by considering how $f_\alpha$ and $f_\beta$ can be represented on variable set $\scope(\lcanode)$.
\begin{lemma}
  \label{lem:norelation}
  In case (I), by letting $\vset\coloneqq\scope(\lcanode)$, $\vset^l\coloneqq\scope(\lcanode^l)$, and $\vset^r\coloneqq\scope(\lcanode^r)$, we have
  \begin{align}
    \covariance{W_{f_\alpha}^\vset}{W_{f_\beta}^\vset} & =  \covariance{W_{f_\alpha}^{\vset^l}}{W_{\mathtrue}^{\vset^l}}\covariance{W_{\mathtrue}^{\vset^r}}{W_{f_\beta}^{\vset^r}} \nonumber \\
  & \ \ + \covariance{W_{f_\alpha}^{\vset^l}}{W_{\mathtrue}^{\vset^l}} \expect{W_{\mathtrue}^{\vset^r}}\expect{W_{f_\beta}^{\vset^r}} \label{eq:norelation} \\
  & \ \ + \expect{W_{f_\alpha}^{\vset^l}}\expect{W_{\mathtrue}^{\vset^l}}\covariance{W_{\mathtrue}^{\vset^r}}{W_{f_\beta}^{\vset^r}}. \nonumber
  \end{align}
\end{lemma}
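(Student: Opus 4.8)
The plan is to reduce the claimed identity to the product-covariance formula~(\ref{eq:prodcov}) by first expressing each of $W_{f_\alpha}^\vset$ and $W_{f_\beta}^\vset$ as a product of two WMCs over the complementary variable blocks $\vset^l$ and $\vset^r$, and then invoking independence across these blocks.

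First I would pin down the geometry of case (I). Since $\dnode(\alpha)$ and $\dnode(\beta)$ have no ancestor--descendant relation, $\lcanode$ is a strict ancestor of both, and in a binary vtree the two decomposition vnodes must lie in different subtrees of $\lcanode$; without loss of generality $\dnode(\alpha)$ is a descendant of $\lcanode^l$ and $\dnode(\beta)$ of $\lcanode^r$ (otherwise a deeper common ancestor would exist, contradicting the LCA). Note that $\bot$ is ruled out here, because $\dnode(\alpha)=\bot$ would force $\lcanode=\lca(\bot,\dnode(\beta))=\dnode(\beta)$, violating case (I). Consequently $\scope(\alpha)\subseteq\scope(\dnode(\alpha))\subseteq\scope(\lcanode^l)=\vset^l$ and likewise $\scope(\beta)\subseteq\vset^r$; and since $\lcanode^l,\lcanode^r$ are the two children of $\lcanode$, their scopes partition $\vset$, giving the disjoint union $\vset=\vset^l\cup\vset^r$.

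Next I would establish the factorization $W_{f_\alpha}^\vset=W_{f_\alpha}^{\vset^l}\cdot W_{\mathtrue}^{\vset^r}$. The point is that $f_\alpha$ depends only on the variables of $\vset^l$, so a model of $f_\alpha$ on $\vset$ is exactly a model of $f_\alpha$ on $\vset^l$ extended by an arbitrary assignment on $\vset^r$; since the weight $W_a^\vset$ in~(\ref{eq:wmc}) factorizes over the disjoint blocks $\vset^l$ and $\vset^r$, summing over models yields the product, with the free block $\vset^r$ contributing precisely the factor $W_{\mathtrue}^{\vset^r}$. The symmetric argument gives $W_{f_\beta}^\vset=W_{\mathtrue}^{\vset^l}\cdot W_{f_\beta}^{\vset^r}$.

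Finally I would apply~(\ref{eq:prodcov}) with $A=W_{f_\alpha}^{\vset^l}$, $B=W_{\mathtrue}^{\vset^l}$, $X=W_{\mathtrue}^{\vset^r}$, $Y=W_{f_\beta}^{\vset^r}$, so that $AX=W_{f_\alpha}^\vset$ and $BY=W_{f_\beta}^\vset$. The independence hypothesis of~(\ref{eq:prodcov}) holds because $(A,B)$ is a function of the weights $\{(P_x,N_x):x\in\vset^l\}$ while $(X,Y)$ is a function of $\{(P_x,N_x):x\in\vset^r\}$, and these two weight families are independent across the disjoint blocks by the standing assumption that $(P_x,N_x)$ and $(P_y,N_y)$ are independent for $x\neq y$. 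Substituting into~(\ref{eq:prodcov}) reproduces the three terms of~(\ref{eq:norelation}) verbatim. The only genuinely delicate step is the WMC factorization over the enlarged variable set $\vset$; I would phrase it through the model/weight decomposition so that the ``free'' variables of $\vset^r$ (resp.\ $\vset^l$) correctly collapse into a $W_{\mathtrue}$ factor rather than being silently dropped, which is exactly the bookkeeping that the vtree-guided tracking of variable sets is designed to handle.
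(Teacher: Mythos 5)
Your proposal is correct and follows essentially the same route as the paper's own proof: it establishes the factorizations $W_{f_\alpha}^\vset=W_{f_\alpha}^{\vset^l}W_{\mathtrue}^{\vset^r}$ and $W_{f_\beta}^\vset=W_{\mathtrue}^{\vset^l}W_{f_\beta}^{\vset^r}$ from $\scope(\alpha)\subseteq\vset^l$, $\scope(\beta)\subseteq\vset^r$ and the disjoint partition $\vset=\vset^l\cup\vset^r$, then applies the product-covariance formula~(\ref{eq:prodcov}) using independence of the weight families across the two blocks. The extra details you supply (ruling out $\bot$, and justifying that $\dnode(\alpha)$ and $\dnode(\beta)$ fall into different subtrees of $\lcanode$) are sound elaborations of facts the paper leaves implicit, not a different argument.
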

\begin{proof}
  Since $\scope(\alpha)\subseteq\vset^l$, $\vset^l\cup\vset^r=\vset$, and $\vset^l\cap\vset^r=\emptyset$, $f_\alpha$ on variable set $\vset$ can be represented as $f_\alpha\wedge\mathtrue^{\vset^r}$, where $\mathtrue^{\vset^r}$ is a $\mathtrue$ function on variable set $\vset^r$.
  Thus, we have $W_{f_\alpha}^\vset=W_{f_\alpha}^{\vset^l}W_{\mathtrue}^{\vset^r}$.
  Similarly, $W_{f_\beta}^\vset=W_{\mathtrue}^{\vset^l}W_{f_\beta}^{\vset^r}$.
  Since $(W_{f_\alpha}^{\vset^l},W_{\mathtrue}^{\vset^l})$ and $(W_{\mathtrue}^{\vset^r},W_{f_\beta}^{\vset^r})$ are independent, the lemma follows from (\ref{eq:prodcov}).
\end{proof}

In case (II), we have the following decomposition.
\begin{lemma}
  \label{lem:ordecomp}
  In case (II), by letting $\vset\coloneqq\scope(\lcanode)$ and $\alpha_1,\ldots,\alpha_k$ be the child nodes of $\alpha$, we have
  \begin{equation}
    \textstyle \covariance{W_{f_\alpha}^\vset}{W_{f_\beta}^\vset}=\sum_{j=1}^{k}\covariance{W_{f_{\alpha_j}}^\vset}{W_{f_\beta}^\vset}. \label{eq:ordecomp}
  \end{equation}
\end{lemma}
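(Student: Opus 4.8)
The plan is to reduce the covariance decomposition to an additive decomposition of the WMC value itself, which is precisely where determinism enters. Since $\alpha$ is a $\vee$-node of an st-d-DNNF and st-d-DNNFs are deterministic, its children satisfy $f_{\alpha_i}\wedge f_{\alpha_j}=\mathfalse$ for $i\neq j$, while $f_\alpha=\bigvee_{j}f_{\alpha_j}$. First I would argue that, as sets of models on the fixed variable set $\vset=\scope(\lcanode)$, we have the disjoint union $\modelset_{f_\alpha}^\vset=\bigsqcup_{j=1}^{k}\modelset_{f_{\alpha_j}}^\vset$. From this, the weight sum defining the WMC in~(\ref{eq:wmc}) splits over the blocks, giving the pointwise random-variable identity $W_{f_\alpha}^\vset=\sum_{j=1}^{k}W_{f_{\alpha_j}}^\vset$ for every realization of the weights. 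Applying the linearity of covariance~(\ref{eq:sumcov}) with $W_{f_\beta}^\vset$ fixed in the second slot then yields~(\ref{eq:ordecomp}) immediately.

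The one point that needs care is that the variable set stays fixed at $\vset=\scope(\lcanode)=\scope(\dnode(\alpha))$ throughout, even though $\scope(\alpha)$ and each $\scope(\alpha_j)$ may be strict subsets of $\vset$ (recall that in case (II), $\dnode(\alpha)$ is an ancestor of $\dnode(\beta)$, so $\lcanode=\dnode(\alpha)$ and $\scope(\alpha)\subseteq\vset$). I would therefore verify that determinism survives the extension of scope to $\vset$: if an assignment $a$ on $\vset$ were simultaneously a model of $f_{\alpha_i}$ and $f_{\alpha_j}$ read as functions on $\vset$ that ignore the variables outside their own scope, then its restriction to $\scope(\alpha)$ would witness $f_{\alpha_i}\wedge f_{\alpha_j}\neq\mathfalse$, contradicting determinism. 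Hence mutual exclusivity is preserved on $\vset$, and together with $f_\alpha=\bigvee_j f_{\alpha_j}$ this gives the claimed disjoint union. The free variables in $\vset\setminus\scope(\alpha)$ contribute identical factors $P_x$ or $N_x$ across the relevant models and so do not obstruct the splitting of the weight sum.

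I expect the proof to be short, since the only genuine content is determinism together with the linearity formula~(\ref{eq:sumcov}). The main (mild) obstacle is bookkeeping: keeping the superscript $\vset$ identical on both sides of~(\ref{eq:ordecomp}), so that this is a true covariance identity at a single variable set rather than a relation among covariances taken on different variable sets. The eventual change of the relevant LCA when one later recurses on each $\alpha_j$ (where $\dnode(\alpha_j)$ may be strictly deeper than $\dnode(\alpha)$) is handled by the algorithm and is not part of this lemma.
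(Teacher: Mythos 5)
Your proposal is correct and follows essentially the same route as the paper's proof: determinism yields the pointwise identity $W_{f_\alpha}^\vset=\sum_{j=1}^{k}W_{f_{\alpha_j}}^\vset$ on the fixed variable set $\vset=\scope(\lcanode)$, and then (\ref{eq:sumcov}) is applied repeatedly. The extra details you supply --- the disjoint union of model sets and the check that mutual exclusivity of the $f_{\alpha_j}$ survives extending the variable set to $\vset$ --- are valid elaborations of what the paper leaves implicit under ``by determinism.''
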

\begin{proof}
  By determinism of $f_\alpha=\bigvee_{j=1}^{k}f_{\alpha_j}$, we have $W_{f_{\alpha}}^\vset=\sum_{j=1}^{k}\!W_{f_{\alpha_j}}^\vset$.
  Eq.~(\ref{eq:ordecomp}) follows by recursively applying (\ref{eq:sumcov}).
\end{proof}

In case (III), two child nodes $\alpha^l,\alpha^r$ satisfy $\scope(\alpha^l)\subseteq\scope(\lcanode^l)$ and $\scope(\alpha^r)\subseteq\scope(\lcanode^r)$ by structured decomposability.
This leads to the following decomposition.
\begin{lemma}
  \label{lem:anddecomp}
  In case (III), suppose $f_\beta$ can be decomposed as $f_{\beta}^\prime\wedge f_{\beta}^{\prime\prime}$, where $\scope(f_{\beta}^\prime)\subseteq\scope(\lcanode^l)$ $\eqqcolon\vset^l$ and $\scope(f_{\beta}^{\prime\prime})\subseteq\scope(\lcanode^r)\eqqcolon\vset^r$.
  Then, by letting $\vset\coloneqq\scope(\lcanode)$,
  \begin{align}
    \covariance{W_{f_\alpha}^\vset}{W_{f_\beta}^\vset} & =  \covariance{W_{f_{\alpha^l}}^{\vset^l}}{W_{f_\beta^\prime}^{\vset^l}}\covariance{W_{f_{\alpha^r}}^{\vset^r}}{W_{f_\beta^{\prime\prime}}^{\vset^r}} \nonumber \\
  & \ \ + \covariance{W_{f_{\alpha^l}}^{\vset^l}}{W_{f_{\beta}^\prime}^{\vset^l}} \expect{W_{f_{\alpha^r}}^{\vset^r}}\expect{W_{f_{\beta}^{\prime\prime}}^{\vset^r}} \label{eq:anddecomp} \\
  & \ \ + \expect{W_{f_{\alpha^l}}^{\vset^l}}\expect{W_{f_{\beta}^\prime}^{\vset^l}}\covariance{W_{f_{\alpha^r}}^{\vset^r}}{W_{f_\beta^{\prime\prime}}^{\vset^r}}. \nonumber
  \end{align}
\end{lemma}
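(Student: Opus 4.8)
The plan is to reduce (\ref{eq:anddecomp}) to a single application of (\ref{eq:prodcov}), exactly as in the proof of Lemma~\ref{lem:norelation} but now with a genuine (non-$\mathtrue$) function on each side of the partition. First I would fix the vtree geometry. Since case (III) assumes $\dnode(\alpha)$ is an ancestor of $\dnode(\beta)$, we have $\lcanode=\dnode(\alpha)$; and by the discussion following Definition~\ref{def:structured}, the vnode witnessing structured decomposability of the $\wedge$-node $\alpha$ is precisely $\dnode(\alpha)=\lcanode$. Hence $\scope(\alpha^l)\subseteq\scope(\lcanode^l)=\vset^l$ and $\scope(\alpha^r)\subseteq\scope(\lcanode^r)=\vset^r$, and because $\lcanode$ has the two children $\lcanode^l,\lcanode^r$, the set $\vset=\scope(\lcanode)$ is the disjoint union $\vset^l\sqcup\vset^r$. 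The decomposition of $f_\beta$ is taken as given by hypothesis, with $\scope(f_\beta^\prime)\subseteq\vset^l$ and $\scope(f_\beta^{\prime\prime})\subseteq\vset^r$.

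The key step is a product factorization of the WMC across this partition. I would first establish the auxiliary fact that whenever a function factors as $g^\prime\wedge g^{\prime\prime}$ with $\scope(g^\prime)\subseteq\vset^l$, $\scope(g^{\prime\prime})\subseteq\vset^r$, and $\vset^l\cap\vset^r=\emptyset$, then $W_{g^\prime\wedge g^{\prime\prime}}^{\vset^l\sqcup\vset^r}=W_{g^\prime}^{\vset^l}\,W_{g^{\prime\prime}}^{\vset^r}$. This holds because an assignment $a$ on $\vset$ is a model of $g^\prime\wedge g^{\prime\prime}$ iff its restrictions $a|_{\vset^l}$ and $a|_{\vset^r}$ are models of $g^\prime$ on $\vset^l$ and of $g^{\prime\prime}$ on $\vset^r$ respectively (each factor depends only on its own block), while the weight in (\ref{eq:wmc}) splits as $W_a^\vset=W_{a|_{\vset^l}}^{\vset^l}\,W_{a|_{\vset^r}}^{\vset^r}$; summing over all models then turns the resulting double sum into a product of sums. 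Applying this to $f_\alpha=f_{\alpha^l}\wedge f_{\alpha^r}$ (decomposability of $\alpha$ gives $\scope(\alpha^l)\cap\scope(\alpha^r)=\emptyset$) yields $W_{f_\alpha}^\vset=W_{f_{\alpha^l}}^{\vset^l}W_{f_{\alpha^r}}^{\vset^r}$, and applying it to the assumed $f_\beta=f_\beta^\prime\wedge f_\beta^{\prime\prime}$ yields $W_{f_\beta}^\vset=W_{f_\beta^\prime}^{\vset^l}W_{f_\beta^{\prime\prime}}^{\vset^r}$.

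Finally I would invoke independence and (\ref{eq:prodcov}). The pair $(W_{f_{\alpha^l}}^{\vset^l},W_{f_\beta^\prime}^{\vset^l})$ is a function of the weights of the variables in $\vset^l$ alone, whereas $(W_{f_{\alpha^r}}^{\vset^r},W_{f_\beta^{\prime\prime}}^{\vset^r})$ depends only on the weights of the variables in $\vset^r$; since $\vset^l$ and $\vset^r$ are disjoint and the weight pairs of distinct variables are independent by assumption, these two pairs are independent. Setting $A=W_{f_{\alpha^l}}^{\vset^l}$, $B=W_{f_\beta^\prime}^{\vset^l}$, $X=W_{f_{\alpha^r}}^{\vset^r}$, $Y=W_{f_\beta^{\prime\prime}}^{\vset^r}$ and substituting the two factorizations into $\covariance{W_{f_\alpha}^\vset}{W_{f_\beta}^\vset}=\covariance{AX}{BY}$, equation (\ref{eq:prodcov}) produces (\ref{eq:anddecomp}) verbatim. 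I expect the only real care-point to be the WMC factorization of the previous paragraph: one must check that both the model set and the weight product split cleanly over the partition, including the ``don't-care'' variables in $\vset^l\setminus\scope(\alpha^l)$ and $\vset^r\setminus\scope(\alpha^r)$ that lie in the variable set but not in the scope, and that the argument degrades gracefully when $f_\beta^\prime$ or $f_\beta^{\prime\prime}$ is the constant $\mathtrue$ (recovering Lemma~\ref{lem:norelation} as a special case). Everything after that is a mechanical substitution.
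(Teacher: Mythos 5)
Your proof is correct and takes essentially the same route as the paper's: both factorize $W_{f_\alpha}^\vset=W_{f_{\alpha^l}}^{\vset^l}W_{f_{\alpha^r}}^{\vset^r}$ and $W_{f_\beta}^\vset=W_{f_\beta^\prime}^{\vset^l}W_{f_\beta^{\prime\prime}}^{\vset^r}$, note that $(W_{f_{\alpha^l}}^{\vset^l},W_{f_\beta^\prime}^{\vset^l})$ and $(W_{f_{\alpha^r}}^{\vset^r},W_{f_\beta^{\prime\prime}}^{\vset^r})$ are independent, and conclude by Eq.~(\ref{eq:prodcov}). The only difference is that you make explicit the details the paper leaves implicit, namely the vtree geometry ($\lcanode=\dnode(\alpha)$ via the assumption that both children of a $\wedge$-node have nonempty scope) and the model-set/weight splitting underlying the WMC factorization over the partition $\vset=\vset^l\cup\vset^r$.
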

\begin{proof}
  We have $W_{f_\alpha}^\vset=W_{f_{\alpha^l}}^{\vset^l}W_{f_{\alpha^r}}^{\vset^r}$ and $W_{f_\beta}^\vset=W_{f_\beta^\prime}^{\vset^l}W_{f_\beta^{\prime\prime}}^{\vset^r}$, where $(W_{f_{\alpha^l}}^{\vset^l},W_{f_\beta^\prime}^{\vset^l})$ and $(W_{f_{\alpha^r}}^{\vset^r},W_{f_\beta^{\prime\prime}}^{\vset^r})$ are independent.
  The lemma follows from (\ref{eq:prodcov}).
\end{proof}

We can decompose $f_\beta=f_\beta^\prime\wedge f_\beta^{\prime\prime}$ for the following cases.
If $\lcanode\neq\dnode(\beta)$, either $\scope(\dnode(\beta))\subseteq\vset^l$ or $\scope(\dnode(\beta))\subseteq\vset^r$ holds.
We can take $(f_\beta^\prime,f_\beta^{\prime\prime})=(f_\beta,\mathtrue)$ for the former and $(f_\beta^\prime,f_\beta^{\prime\prime})=(\mathtrue,f_\beta)$ for the latter.
If $\lcanode=\dnode(\beta)$ and $\beta$ is a $\wedge$-node, child nodes $\beta^l,\beta^r$ satisfy $\scope(\beta^l)\subseteq\vset^l$ and $\scope(\beta^r)\subseteq\vset^r$ by structured decomposability.

\subsection{Procedure and Complexity}
We can recursively decompose $\covariance{W_{f_\alpha}}{W_{f_\beta}}$ into the covariances and expectations of the WMCs of child nodes with Lemmas~\ref{lem:norelation}--\ref{lem:anddecomp}.
The base cases of the recursion, e.g., the case where both are literals with the same Boolean variable, can be resolved using the input (co)variances $\sigma_{P_x}^2,\sigma_{N_x}^2,\sigma_{P_xN_x}$ of weights.
Also, we pre-compute $\expect{W_{f_\gamma}}$ for every node $\gamma$ in st-d-DNNFs $\alpha,\beta$.
Since this procedure is identical to a standard one for computing WMC with st-d-DNNFs, the details of computing expectations are in Appendix~\ref{app:preprocessing}.

Algorithm~\ref{alg:prod} is the proposed covariance computation algorithm.
This algorithm outputs a pair $(\lcanode,\covariance{W_{f_\alpha}^\vset}{W_{f_\beta}^\vset})$, where $\vset=\scope(\lcanode)$.
We cache the output for $\covariance{\alpha}{\beta}$ in $\covvar[\alpha,\beta]$ once computed.
$\expvar[\gamma]$ stores a pair of $\vnode=\dnode(\gamma)$ and $\expect{W_{f_\gamma}^{\scope(\vnode)}}$.
As stated above, these can be pre-computed with a standard WMC algorithm; we defer the details to Appendix~\ref{app:preprocessing}.
Lines 3 and 4 deal with the base cases and lines 5--10 use Lemma~\ref{lem:norelation}.
Lines 11--16 deal with the remaining base cases involving literals.
Lines 19 and 20 use Lemma~\ref{lem:ordecomp} and lines 21--31 use Lemma~\ref{lem:anddecomp}.
To ensure that $f_\beta$ can be decomposed into $f_\beta^\prime\wedge f_\beta^{\prime\prime}$ as in Lemma~\ref{lem:anddecomp}, $\alpha,\beta$ are swapped in line~18, if needed.

\begin{algorithm}[!t]
\caption{$\covfunc(\alpha,\beta)$: computing $\covariance{W_{f_\alpha}}{W_{f_\beta}}$}
\label{alg:prod}
{\footnotesize
\Input{Two st-d-DNNFs $\alpha, \beta$ respecting the same vtree}
\Output{Pair of $\lcanode=\lca(\dnode(\alpha),\dnode(\beta))$ and $\covariance{W_{f_\alpha}^\vset}{W_{f_\beta}^\vset}$ ($\vset=\scope(\lcanode)$)}
\lIf(\tcp*[f]{Cache for $\covfunc(\alpha,\beta)$}){$\covvar[\alpha,\beta]\neq \textrm{null}$}{\Return $\covvar[\alpha,\beta]$}
$\lcanode\gets \lca(\dnode(\alpha),\dnode(\beta))$\;
\lIf {$\alpha=\mathfalse$ or $\beta=\mathfalse$}{\Return $(\lcanode,0)$}
\lIf {$\alpha=\mathtrue$ and $\beta=\mathtrue$}{\Return $(\bot,0)$}
\uIf(\tcp*[f]{Let $\scope(\alpha)\subseteq\scope(\lcanode^l)$ and $\scope(\beta)\subseteq\scope(\lcanode^r)$}){$\lcanode\neq\dnode(\alpha)$ and $\lcanode\neq\dnode(\beta)$}{
  $\mathtt{el}\leftarrow\adjexpfunc(\lcanode^l,\expvar[\alpha])\cdot\adjexpfunc(\lcanode^l,\expvar[\mathtrue])$\;
  $\mathtt{er}\leftarrow\adjexpfunc(\lcanode^r,\expvar[\mathtrue])\cdot\adjexpfunc(\lcanode^r,\expvar[\beta])$\;
  $\mathtt{cl}\leftarrow\adjcovfunc(\lcanode^l,\covfunc(\alpha,\mathtrue),\expvar[\alpha],\expvar[\mathtrue])$\;
  $\mathtt{cr}\leftarrow\adjcovfunc(\lcanode^r,\covfunc(\mathtrue,\beta),\expvar[\mathtrue],\expvar[\beta])$\;
  $\resvar\leftarrow\mathtt{cl}\cdot\mathtt{cr}+\mathtt{cl}\cdot\mathtt{er}+\mathtt{el}\cdot\mathtt{cr}$ \tcp*{Eq. (\ref{eq:norelation})}
}
\uElseIf {$\alpha,\beta$ are both leaf nodes}{
  \lIf{$(\alpha,\beta)=(\mathtrue,x),(x,\mathtrue)$}{$\resvar\leftarrow \sigma^2_{P_x}+\sigma_{P_xN_x}$}
  \lElseIf{$(\alpha,\beta)\!=\!(\mathtrue,\neg x),(\neg x,\mathtrue)$}{$\resvar\!\leftarrow\!\sigma^2_{N_x}\!+\!\sigma_{P_xN_x}$}
  \lElseIf{$\alpha=\beta=x$}{$\resvar\leftarrow\sigma^2_{P_x}$}
  \lElseIf{$\alpha=\beta=\neg x$}{$\resvar\leftarrow\sigma^2_{N_x}$}
  \lElse(\tcp*[f]{$(\alpha,\beta)=(x,\neg x),(\neg x,x)$}){$\resvar\leftarrow\sigma_{P_xN_x}$}
}
\Else{
  Swap $\alpha,\beta$ if (i) $\lcanode=\dnode(\beta)\neq\dnode(\alpha)$ or (ii) $\dnode(\alpha)=\dnode(\beta)$ and only $\beta$ is a $\vee$-node\;
  \uIf(\tcp*[f]{$\alpha_1,\ldots,\alpha_k$: the child nodes of $\alpha$}){$\alpha$ is a $\vee$-node}{
    $\resvar\!\leftarrow\!\sum_{j=1}^{k}\adjcovfunc(\lcanode,\covfunc(\alpha_i,\beta),\expvar[\alpha_i],\expvar[\beta])$\tcp*{Eq. (\ref{eq:ordecomp})}
  }
  \Else(\tcp*[f]{$\alpha$ is a $\wedge$-node}){
    $\alpha^l,\alpha^r\leftarrow(\text{child nodes of $\alpha$})$ s.t. $\scope(\alpha^l)\subseteq\scope(\lcanode^l)$ and $\scope(\alpha^r)\subseteq\scope(\lcanode^r)$\;
    \lIf{$\scope(\dnode(\beta))\subseteq\scope(\lcanode^l)$}{$\beta^l\leftarrow\beta$, $\beta^r\leftarrow\mathtrue$}
    \lElseIf{$\scope(\dnode(\beta))\subseteq\scope(\lcanode^r)$}{$\beta^l\leftarrow\mathtrue$, $\beta^r\leftarrow\beta$}
    \Else(\tcp*[f]{$\dnode(\alpha)=\dnode(\beta)$; thus $\beta$ is a $\wedge$-node due to line 18}){
      $\beta^l,\beta^r\leftarrow(\text{child nodes of $\beta$})$ s.t. $\scope(\beta^l)\subseteq\scope(\lcanode^l)$ and $\scope(\beta^r)\subseteq\scope(\lcanode^r)$}
    $\mathtt{el}\leftarrow\adjexpfunc(\lcanode^l,\expvar[\alpha^l])\cdot\adjexpfunc(\lcanode^l,\expvar[\beta^l])$\;
    $\mathtt{er}\leftarrow\adjexpfunc(\lcanode^r,\expvar[\alpha^r])\cdot\adjexpfunc(\lcanode^r,\expvar[\beta^r])$\;
    $\mathtt{cl}\leftarrow\adjcovfunc(\lcanode^l,\covfunc(\alpha^l,\beta^l),\expvar[\alpha^l],\expvar[\beta^l])$\;
    $\mathtt{cr}\leftarrow\adjcovfunc(\lcanode^r,\covfunc(\alpha^r,\beta^r),\expvar[\alpha^r],\expvar[\beta^r])$\;
    $\resvar\leftarrow\mathtt{cl}\cdot\mathtt{cr}+\mathtt{cl}\cdot\mathtt{er}+\mathtt{el}\cdot\mathtt{cr}$ \tcp*{Eq. (\ref{eq:anddecomp})}
  }
}
\Return $\covvar[\alpha,\beta]\leftarrow (\lcanode,\resvar)$
}
\end{algorithm}

We must care about the variable set during the computation.
For this purpose, we implement two auxiliary functions $\adjexpfunc$ and $\adjcovfunc$.
$\adjexpfunc$ receives vnode $\wnode$ and $\expvar[\alpha^\prime]$, where $\scope(\dnode(\alpha^\prime))\subseteq\scope(\wnode)$, and returns $\expect{W_{f_{\alpha^\prime}}^{\scope(\wnode)}}$.
$\adjcovfunc$ receives vnode $\wnode$, the output of $\covfunc(\alpha^\prime,\beta^\prime)$, $\expvar[\alpha^\prime]$, and $\expvar[\beta^\prime]$, where $\scope(\lca(\dnode(\alpha^\prime),\dnode(\beta^\prime)))\subseteq\scope(\wnode)$, and returns $\covariance{W_{f_{\alpha^\prime}}^{\scope(\wnode)}}{W_{f_{\beta^\prime}}^{\scope(\wnode)}}$.
Using these functions, we adjust the variable sets.
With a preprocessing taking $\order{|\vset|^2}$ time, these functions can be computed in constant time; see Appendix~\ref{app:preprocessing}.
The correctness of Algorithm~\ref{alg:prod}, i.e., that $\covfunc(\alpha,\beta)$ returns $\covariance{W_{f_\alpha}^\vset}{W_{f_\beta}^\vset}$, follows from the fact that cases (I), (II), and (III) are comprehensive and recursive decomposition follows Lemmas~\ref{lem:norelation}--\ref{lem:anddecomp} for each case.
We now move to the proof of Theorem~\ref{thm:tractable}.
\begin{proof}[Proof of Theorem~\ref{thm:tractable}]
  Preprocessing requires $\order{|\vset|^2}$ time, and computing expectations takes $\order{|\alpha|+|\beta|}$ time.
  Computing the LCA (line~2) needs $\order{1}$ time with a data structure that is built in $\order{|\vset|}$ time~\citep{bender00lca}.
  Thus, other than recursion, $\covfunc(\alpha^\prime,\beta^\prime)$ requires at most $\order{k_{\alpha^\prime}k_{\beta^\prime}}$ time, where $k_{\alpha^\prime},k_{\beta^\prime}$ are the number of child nodes of $\alpha^\prime,\beta^\prime$.
  Since the answer is cached in $\covvar[\alpha^\prime,\beta^\prime]$ once $\covfunc(\alpha^\prime,\beta^\prime)$ is computed, the overall complexity of $\covfunc(\alpha,\beta)$ is bounded by $\order{|\alpha||\beta|}$.
\end{proof}

We finally give a brief note on the assumption that two st-d-DNNFs share the same vtree in solving \CVCquery query.
Such an assumption is also imposed on some queries that take multiple st-d-DNNFs as an input~\citep{pipatsrisawat08structured}; e.g., sentential entailment and bounded conjunction defined in~\citep{darwiche02kcmap}.
Although we do not prove the tractability of \CVCquery for the case where two st-d-DNNFs do not respect the same vtree, we believe it is intractable because st-d-DNNFs do not admit polytime sentential entailment unless P=NP when they do not share the vtree.
Note that, for \VCquery, such an assumption is not imposed because we have a single input st-d-DNNF for \VCquery.

\section{Intractability Results}
The goal of this section is to prove the following result.
\begin{theorem}
  \label{thm:intractability}
  When $f,g$ are given as st-DNNFs, d-DNNFs, or FBDDs, \CVCquery is intractable, i.e., it cannot be solved in polynomial time unless P=NP.
  When $f$ is given as an st-DNNF, a d-DNNF, or an FBDD, \VCquery is intractable.
\end{theorem}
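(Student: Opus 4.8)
The plan is to choose the weight moments so that the (co)variance collapses onto a \#P-hard counting quantity, and then to appeal to Toda's theorem so that a polynomial-time evaluator would force P=NP. I would start from $\variance{W_f}=\expect{W_f^2}-\expect{W_f}^2$ and the factorization $\expect{W_fW_g}=\sum_{a\in\modelset_f}\sum_{b\in\modelset_g}\prod_{x\in\vset}w_x(a(x),b(x))$, where $w_x(\mathtrue,\mathtrue)=\expect{P_x^2}$, $w_x(\mathfalse,\mathfalse)=\expect{N_x^2}$, and $w_x(\mathtrue,\mathfalse)=w_x(\mathfalse,\mathtrue)=\expect{P_xN_x}$ (this holds because the pairs $(P_x,N_x)$ are independent across $x$). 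Taking $\mu_{P_x}=\mu_{N_x}=0$, $\sigma_{P_x}^2=\sigma_{N_x}^2=1$, and $\sigma_{P_xN_x}=0$ for all $x$---a legitimate positive semidefinite choice---makes $w_x$ the identity, so every $\expect{W_f}$ vanishes and $\prod_x w_x(a(x),b(x))=\mathbf 1[a=b]$. Consequently $\covariance{W_f}{W_g}=\expect{W_fW_g}=|\modelset_f\cap\modelset_g|=\modelcount(f\wedge g)$, i.e.\ \CVCquery literally computes the model count of a conjunction under these weights.

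For st-DNNFs the single-function specialization already suffices: with the same weights $\variance{W_f}=|\modelset_f|$. Because structured decomposability constrains only $\wedge$-nodes, an arbitrary DNF can be written as a polynomial-size st-DNNF (one $\vee$-node over cubes, each cube laid out to respect a fixed vtree), and counting the models of a DNF (\#DNF) is \#P-hard. Hence \VCquery is intractable for st-DNNFs, and \CVCquery inherits this by setting $g=f$ (so the two inputs trivially share a vtree).

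For d-DNNFs and FBDDs single-function model counting is tractable, so the hardness must reside in the conjunction. I would reduce from counting perfect matchings of a bipartite graph (the permanent), which is \#P-complete. On the edge variables, let $f$ encode ``exactly one chosen edge in each row'' and $g$ encode ``exactly one chosen edge in each column''; reading $f$ row by row and $g$ column by column turns each into a polynomial-size OBDD, hence an FBDD and a d-DNNF, and $\modelcount(f\wedge g)$ counts the perfect matchings. This settles \CVCquery at once, since $\covariance{W_f}{W_g}=\modelcount(f\wedge g)$. For \VCquery I would fold $f,g$ into a single function with a fresh selector, $h=(s\wedge f)\vee(\neg s\wedge g)$, which is a d-DNNF (the top $\vee$ is deterministic through $s$) and an FBDD (a root decision on $s$) of size $\order{|f|+|g|}$. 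Choosing $\expect{P_s^2}=\expect{N_s^2}=1$ and $\expect{P_sN_s}=\rho\neq0$ yields $\variance{W_h}=|\modelset_f|+|\modelset_g|+2\rho\,\modelcount(f\wedge g)$; since $|\modelset_f|$ and $|\modelset_g|$ are computable in polynomial time for these classes, $\modelcount(f\wedge g)$ is recovered from $\variance{W_h}$.

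The main obstacle is the step exhibiting representations that are individually small within a class for which counting is \emph{easy}, yet whose conjunction encodes a \emph{hard} count. The incompatible row/column orders of the OBDD pair are the crucial gadget, exploiting that FBDDs and d-DNNFs permit different variable orders but do not support polynomial-time conjunction. The remaining points---verifying that the prescribed moments form valid covariance data and that the selector construction stays inside each target class---are routine. The ``unless P=NP'' conclusion then follows because a polynomial-time exact evaluator for any of these \#P-hard quantities would, by Toda's theorem, collapse the polynomial hierarchy into P.
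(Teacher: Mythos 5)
Your proof is correct, but it takes a genuinely different route from the paper's. The skeleton is shared---choose weight moments so that the covariance counts models of a conjunction, then fold two functions into one via a fresh selector variable (your $h=(s\wedge f)\vee(\neg s\wedge g)$ is exactly the construction in the paper's Lemma~\ref{lem:ite})---but both key ingredients differ. First, the weight gadget: the paper sets $\mu_{P_x}=\mu_{N_x}=1$, $\sigma_{P_x}^2=\sigma_{N_x}^2=3$, $\sigma_{P_xN_x}=-1$, under which distinct assignments have covariance $-1$ rather than $0$, so it needs Lemma~\ref{lem:modelcount}'s bounding argument to extract $|\modelset_{f\wedge g}^\vset|=\lceil\covariance{W_f^\vset}{W_g^\vset}/(4^n-1)\rceil$; your mean-zero choice kills every cross term exactly, giving $\covariance{W_f}{W_g}=\modelcount(f\wedge g)$ outright, with no normalization or ceiling---a cleaner identity. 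Second, the hardness source: the paper imports known intractability results (\CTquery for st-DNNFs, \SEquery for d-DNNFs and FBDDs, both from the knowledge compilation literature), whereas you re-derive hardness from scratch via \#DNF for st-DNNFs and via bipartite perfect matchings encoded as two OBDDs with incompatible row/column orders for d-DNNFs and FBDDs; your two-order OBDD pair is essentially the classical gadget underlying the cited \SEquery-hardness, so you are reproving it, but in exchange your argument is self-contained and establishes \#P-hardness of the function problem, which is in a sense stronger than the paper's reduction from the decision query \SEquery. Your selector algebra also simplifies under mean-zero weights, yielding $\variance{W_h}=|\modelset_f|+|\modelset_g|+2\rho\,\modelcount(f\wedge g)$ versus the paper's more involved identity in Lemma~\ref{lem:ite}. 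Two minor remarks, neither a gap: the appeal to Toda's theorem is overkill, since a polynomial-time exact evaluator for \#DNF already puts SAT in P by elementary means (a CNF $\phi$ is satisfiable iff $\modelcount(\neg\phi)<2^n$); and for the selector variable you should state explicitly that $\mu_{P_s}=\mu_{N_s}=0$, $\sigma_{P_s}^2=\sigma_{N_s}^2=1$, $\sigma_{P_sN_s}=\rho$ with $0<|\rho|\leq 1$ realizes $\expect{P_s^2}=\expect{N_s^2}=1$ and $\expect{P_sN_s}=\rho$ as a valid covariance specification in the input format of Problem~\ref{prob:variance}; both points are routine.
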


We prove this by first introducing some queries from the knowledge compilation map~\citep{darwiche02kcmap}.
\begin{problem}
  \label{prob:others}
  Given Boolean function $f$, \emph{model counting} query \CTquery computes the number of models of $f$, i.e., $|\modelset_f^\vset|$.
  Given Boolean functions $f,g$, \emph{sentential entailment} query \SEquery asks whether $f\models g$, i.e., $\modelset_f^\vset\subseteq\modelset_g^\vset$.
\end{problem}
We now show a polynomial time reduction from the \CTquery and \SEquery queries to the \VCquery and \CVCquery queries.
It is known that \CTquery is intractable when $f$ is given as an st-DNNF~\citep{pipatsrisawat08structured}.
It is also known that \SEquery is intractable when $f,g$ are given as d-DNNFs or FBDDs~\citep{darwiche02kcmap}.
Thus, the existence of the above reduction indicates the intractability of \VCquery and \CVCquery queries with such representations.

Let $n\coloneqq|\vset|$. The key lemmas are as follows.
\begin{lemma}
  \label{lem:assignment}
  Let $\mu_{P_x}=\mu_{N_x}=1$, $\sigma_{P_x}^2=\sigma_{N_x}^2=3$, and $\sigma_{P_xN_x}=-1$ for every variable $x\in\vset$.
  Then, for any assignment $a$ of $\vset$, $\variance{W_a}=4^n-1$.
  In addition, for any assignments $a,b\ (a\neq b)$ of $\vset$, $\covariance{W_a}{W_b}=-1$.
\end{lemma}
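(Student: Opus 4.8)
The plan is to exploit the product structure of $W_a$ together with the cross-variable independence assumption, reducing both claims to per-variable computations.

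First I would write $W_a=\prod_{x\in\vset} Z_x^a$, where $Z_x^a\coloneqq P_x$ when $a(x)=\mathtrue$ and $Z_x^a\coloneqq N_x$ when $a(x)=\mathfalse$. Under the given parameters every factor satisfies $\expect{Z_x^a}=1$, and hence $\expect{(Z_x^a)^2}=\variance{Z_x^a}+(\expect{Z_x^a})^2=3+1=4$. Because $(P_x,N_x)$ and $(P_y,N_y)$ are independent for $x\neq y$, the factors $Z_x^a$ are mutually independent, so the standard variance identity for a product of independent variables gives $\variance{W_a}=\prod_x\expect{(Z_x^a)^2}-\prod_x(\expect{Z_x^a})^2=4^n-1$. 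This settles the first claim.

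For the covariance I would compute $\covariance{W_a}{W_b}=\expect{W_a W_b}-\expect{W_a}\expect{W_b}$ directly. Since $\expect{W_a}=\expect{W_b}=1$, it remains to evaluate $\expect{W_a W_b}$. Writing $W_a W_b=\prod_x Z_x^a Z_x^b$ and again invoking cross-variable independence, this factors as $\prod_x\expect{Z_x^a Z_x^b}$. For a coordinate with $a(x)=b(x)$ the factor is $\expect{P_x^2}=4$ or $\expect{N_x^2}=4$; for a coordinate with $a(x)\neq b(x)$ the factor is $\expect{P_x N_x}=\sigma_{P_xN_x}+\mu_{P_x}\mu_{N_x}=-1+1=0$.

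The key observation is that since $a\neq b$, at least one coordinate $x$ has $a(x)\neq b(x)$, so the whole product collapses to $\expect{W_a W_b}=0$, yielding $\covariance{W_a}{W_b}=0-1=-1$. There is no substantial obstacle here; the only points requiring care are the repeated use of the independence of $(P_x,N_x)$ across distinct $x$ to factor the joint expectations, and the recognition that a single disagreeing coordinate forces the product expectation to vanish. The latter is precisely what makes this particular choice of parameters yield the clean constant values, which is what the subsequent reduction will exploit.
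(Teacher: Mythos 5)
Your proof is correct, but it takes a genuinely different route from the paper's. You compute raw moments directly: $\variance{W_a}=\prod_{x}\expect{(Z^a_x)^2}-\prod_{x}(\expect{Z^a_x})^2=4^n-1$, and $\covariance{W_a}{W_b}=\expect{W_aW_b}-1$ with $\expect{W_aW_b}=\prod_x\expect{Z^a_xZ^b_x}$, where any single coordinate with $a(x)\neq b(x)$ contributes $\expect{P_xN_x}=\sigma_{P_xN_x}+\mu_{P_x}\mu_{N_x}=0$ and collapses the whole product. The paper never forms these second moments explicitly; it instead peels off one variable at a time using its product-covariance identity (\ref{eq:prodcov}): writing $W_a^\vset=Q^a_xW_a^{\vset\setminus\{x\}}$, it obtains for $a=b$ the recursion $\variance{W_a^\vset}=4\,\variance{W_a^{\vset\setminus\{x\}}}+3$, summing to $3(1+4+\cdots+4^{n-1})=4^n-1$, and for $a\neq b$ it chooses a disagreeing $x$ so that the coefficient $\covariance{Q^a_x}{Q^b_x}+\expect{Q^a_x}\expect{Q^b_x}=-1+1=0$ annihilates the recursive term, leaving $\covariance{W_a^\vset}{W_b^\vset}=\covariance{Q^a_x}{Q^b_x}=-1$ in one step. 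Both arguments ultimately rest on the same two per-variable facts (second moment $4$ when the assignments agree, cross moment $0$ when they disagree); yours is more elementary and delivers a closed form in one line, making transparent why a single disagreement suffices, while the paper's version stays inside the toolkit of (\ref{eq:sumcov})--(\ref{eq:prodcov}) that also drives Lemmas~\ref{lem:norelation}--\ref{lem:anddecomp} and Algorithm~\ref{alg:prod}, keeping the intractability section stylistically uniform with the tractability section. One point worth making explicit in your write-up: factoring $\expect{W_aW_b}$ over coordinates uses \emph{mutual} independence of the family $\{(P_x,N_x)\}_{x\in\vset}$, not merely pairwise independence; this is the intended reading of the paper's assumption (its own factorized formula for $\expect{W_f^\vset}$ needs it too), and the paper's recursive proof relies on it in exactly the same way, so this is a clarification rather than a gap.
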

\begin{proof}
  We can decompose $W_a^\vset=Q^a_xW_a^{\vset\setminus\{x\}}$, where $Q^a_x=P_x$ if $a(x)=\mathtrue$ or $Q^a_x=N_x$ otherwise.
  Similar decomposition can be derived for $W_b^\vset$.
  Thus, by (\ref{eq:prodcov}),
  \begin{align}
    \covariance{W_a^\vset}{W_b^\vset} & =
    (\covariance{Q^a_x}{Q^b_x}+\expect{Q^a_x}\expect{Q^b_x})\covariance{W_a^{\vset\setminus\{x\}}}{W_b^{\vset\setminus\{x\}}} \nonumber \\
    &\ \ +\covariance{Q^a_x}{Q^b_x}\expect{W_a^{\vset\setminus\{v\}}}\expect{W_b^{\vset\setminus\{x\}}}. \label{eq:lemassignment}
  \end{align}
  Here, $\expect{Q^a_x}=\expect{Q^b_x}=\expect{W_a^{\vset\setminus\{x\}}}=\expect{W_b^{\vset\setminus\{x\}}}=1$ because $\mu_{P_x}=\mu_{N_x}=1$ for any $x\in\vset$.
  When $a=b$, (\ref{eq:lemassignment}) becomes $\variance{W_a^\vset}=4\variance{W_a^{\vset\setminus\{x\}}}+3$ because $\variance{Q_x^a}=3$ regardless of whether $Q_x^a$ equals $P_x$ or $N_x$.
  By applying this formula recursively for every Boolean variable $x$, we have 
  $\variance{W_a^\vset}=3(1+4+\cdots+4^{n-1})=4^n-1$.
  When $a\neq b$, by letting $x$ be a Boolean variable satisfying $a(x)\neq b(x)$, we have $\covariance{Q^a_x}{Q^b_x}=\covariance{P_x}{N_x}=-1$.
  By substituting $\covariance{Q^a_x}{Q^b_x}$ in (\ref{eq:lemassignment}), $\covariance{W_a^\vset}{W_b^\vset}=-1$.
\end{proof}
\begin{lemma}
  \label{lem:modelcount}
  Let $f,g$ be Boolean functions on variable set $\vset$.
  Then, under identical settings of expectations and (co)variances as Lemma~\ref{lem:assignment}, $|\modelset_f^\vset|=\lceil\variance{W_f^\vset}/(4^n-1)\rceil$ and $|\modelset_{f\wedge g}^\vset|=\lceil\covariance{W_f^\vset}{W_g^\vset}/(4^n-1)\rceil$.
\end{lemma}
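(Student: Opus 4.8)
The plan is to reduce the lemma to pure counting by exploiting the bilinearity of covariance together with the two constants furnished by Lemma~\ref{lem:assignment}. Recall the expansion stated just before Example~\ref{ex:variance}: $\covariance{W_f}{W_g}=\sum_{a\in\modelset_f}\sum_{b\in\modelset_g}\covariance{W_a}{W_b}$, and in particular $\variance{W_f}=\sum_{a,b\in\modelset_f}\covariance{W_a}{W_b}$. First I would split each double sum into its diagonal part ($a=b$) and its off-diagonal part ($a\neq b$). Under the settings of Lemma~\ref{lem:assignment} every diagonal term equals $4^n-1$ and every off-diagonal term equals $-1$, so the sums are determined entirely by how many pairs of each kind occur.

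Counting these pairs gives closed forms. Writing $m\coloneqq|\modelset_f|$ for the variance, there are $m$ diagonal pairs and $m(m-1)$ off-diagonal pairs, so $\variance{W_f}=m(4^n-1)-m(m-1)=m(4^n-m)$. For the covariance, set $p\coloneqq|\modelset_f|$, $q\coloneqq|\modelset_g|$, and $r\coloneqq|\modelset_{f\wedge g}|=|\modelset_f\cap\modelset_g|$; the diagonal pairs are exactly the shared models (count $r$) and the off-diagonal pairs number $pq-r$, yielding $\covariance{W_f}{W_g}=r(4^n-1)-(pq-r)=r\cdot 4^n-pq$. These two identities are the crux of the argument; the rest is showing the stated ceilings recover $m$ and $r$.

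It then remains to verify $\lceil m(4^n-m)/(4^n-1)\rceil=m$ and $\lceil(r\cdot 4^n-pq)/(4^n-1)\rceil=r$, which I would do by sandwiching each quotient strictly between (value${}-1$) and (value). The upper bounds are routine algebra: $m(4^n-m)\le m(4^n-1)$ holds iff $m\ge 1$, and $r\cdot 4^n-pq\le r(4^n-1)$ holds iff $r\le pq$, which is immediate from $r\le\min(p,q)$. The lower bounds reduce, after clearing the positive denominator, to the strict inequalities $m(m-1)<4^n-1$ and $pq-r<4^n-1$. I expect these to be the main obstacle, since they cannot follow from bounds on the covariance alone but must invoke the combinatorial constraints among $m,p,q,r$.

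To close the lower bounds I would use $m,p,q\le 2^n$ together with the inclusion--exclusion relation $p+q-r=|\modelset_f\cup\modelset_g|\le 2^n$, i.e.\ $r\ge p+q-2^n$. For the variance, $m\le 2^n$ gives $m(m-1)\le 2^n(2^n-1)=4^n-2^n<4^n-1$ whenever $n\ge1$, with $m=0$ handled trivially. For the covariance, substituting $r\ge p+q-2^n$ yields $pq-r\le pq-p-q+2^n=(p-1)(q-1)+2^n-1$; since $(p-1)(q-1)$ is bilinear on $[0,2^n]^2$ its maximum sits at the corner $p=q=2^n$, giving $pq-r\le(2^n-1)^2+2^n-1=4^n-2^n<4^n-1$. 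With both ceilings pinned down, the two equalities of the lemma follow.
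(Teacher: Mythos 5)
Your proof is correct and follows essentially the same route as the paper's: both expand $\covariance{W_f^\vset}{W_g^\vset}$ over pairs of models, split the double sum into diagonal and off-diagonal terms evaluated via Lemma~\ref{lem:assignment}, and sandwich the result strictly between $(4^n-1)(|\modelset_{f\wedge g}^\vset|-1)$ and $(4^n-1)|\modelset_{f\wedge g}^\vset|$ to extract the ceiling. The only cosmetic differences are that you derive exact closed forms ($m(4^n-m)$ and $r\cdot 4^n-pq$) and bound the off-diagonal count $pq-r$ via inclusion--exclusion, whereas the paper obtains the same bound $4^n-2^n<4^n-1$ more directly by noting the off-diagonal pairs are contained in the set of all distinct ordered pairs of assignments, of size $2^n(2^n-1)$.
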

\begin{proof}
  By recursively applying (\ref{eq:sumcov}), we have
  \begin{align*}
    \covariance{W_f^\vset}{W_g^\vset} & = \sum_{a\in\modelset_f^\vset}\sum_{b\in\modelset_g^\vset}\covariance{W_a^\vset}{W_b^\vset} \\
    & = \sum_{a\in\modelset_f^\vset\cap\modelset_g^\vset}\variance{W_a^\vset} + \sum_{(a,b)\in\modelset_f^\vset\times\modelset_g^\vset:a\neq b}\covariance{W_a^\vset}{W_b^\vset}.
  \end{align*}
  The first term is $(4^n-1)|\modelset_{f\wedge g}^\vset|$ and the second term is $-|\{(a,b)\in\modelset_f^\vset\times\modelset_g^\vset\mid a\neq b\}|$ by Lemma~\ref{lem:assignment}.
  The latter can be lower bounded by $-|\{(a,b)\in\modelset_\mathtrue^\vset\times\modelset_\mathtrue^\vset\mid a\neq b\}|=-2^n(2^n-1)=-(4^n-2^n)>-(4^n-1)$.
  Thus, we have
  \begin{equation*}
    (4^n\!-\!1)(|\modelset_{f\wedge g}^\vset|\!-\!1)<\covariance{W_f^\vset}{W_g^\vset}\leq (4^n\!-\!1)|\modelset_{f\wedge g}^\vset|,
  \end{equation*}
  indicating $|\modelset_{f\wedge g}^\vset|=\lceil\covariance{W_f^\vset}{W_g^\vset}/(4^n-1)\rceil$.
  By setting $g=f$, we have $|\modelset_f^\vset|=\lceil\variance{W_f^\vset}/(4^n-1)\rceil$.
\end{proof}

Lemma~\ref{lem:modelcount} indicates the reductions from \CTquery to \VCquery and from \SEquery to \CVCquery.
Given Boolean function $f$, we can answer \CTquery by computing $\variance{W_f}$ under the settings of Lemma~\ref{lem:assignment}.
Given Boolean functions $f,g$, we can answer \SEquery as follows.
We compute $\variance{W_f}$ and $\covariance{W_f}{W_g}$ under the settings of Lemma~\ref{lem:assignment} and obtain $|\modelset_f^\vset|$ and $|\modelset_{f\wedge g}^\vset|$ by Lemma~\ref{lem:modelcount}.
Then $f\models g$ if and only if $|\modelset_f^\vset|=|\modelset_{f\wedge g}^\vset|$.
Combined with the intractability results of \CTquery and \SEquery, the intractability of \VCquery for st-DNNFs and that of \CVCquery for d-DNNFs and FBDDs follow.
Note that \CVCquery is also intractable for st-DNNFs since \VCquery can be reduced to \CVCquery with $g=f$.

The remaining is to show the intractability of \VCquery for d-DNNFs and FBDDs.
We use the following lemma.
\begin{lemma}
  \label{lem:ite}
  Let $f,g$ be Boolean functions on variable set $\vset$, $z\notin\vset$ be a Boolean variable, and $h=(z\wedge f)\vee(\neg z\wedge g)$.
  We set $\mu_{P_z}=\mu_{N_z}=1$ and  $\sigma_{P_z}^2=\sigma_{N_z}^2=-\sigma_{P_zN_z}=3$, and $N_x$ and $P_x$ $(x\in\vset)$ have identical settings as Lemma~\ref{lem:assignment}.
  Then, $\covariance{W_f^\vset}{W_g^\vset}=\variance{W_f^\vset}+\variance{W_g^\vset}-\variance{W_h^{\vset\cup\{z\}}}/4+3(\expect{W_f^\vset}-\expect{W_g^\vset})^2/4$.
\end{lemma}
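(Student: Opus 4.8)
The plan is to first rewrite the weighted model count of $h$ in terms of those of $f$ and $g$, and then expand its variance using the product-covariance identity (\ref{eq:prodcov}). Since $h=(z\wedge f)\vee(\neg z\wedge g)$ with $z\notin\vset$, the two disjuncts are mutually exclusive, so the determinism of this decision node gives $W_h^{\vset\cup\{z\}}=W_{z\wedge f}^{\vset\cup\{z\}}+W_{\neg z\wedge g}^{\vset\cup\{z\}}$. Each model of $z\wedge f$ on $\vset\cup\{z\}$ sets $z$ to $\mathtrue$ and restricts to a model of $f$ on $\vset$, so $W_{z\wedge f}^{\vset\cup\{z\}}=P_zW_f^\vset$, and symmetrically $W_{\neg z\wedge g}^{\vset\cup\{z\}}=N_zW_g^\vset$. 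Hence $W_h^{\vset\cup\{z\}}=P_zW_f^\vset+N_zW_g^\vset$.

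Next, because $z\notin\vset$, the weight pair $(P_z,N_z)$ is independent of all weights of variables in $\vset$, and therefore of the pair $(W_f^\vset,W_g^\vset)$. I would expand $\variance{W_h^{\vset\cup\{z\}}}=\variance{P_zW_f^\vset+N_zW_g^\vset}$ by the bilinearity of covariance into $\variance{P_zW_f^\vset}+\variance{N_zW_g^\vset}+2\covariance{P_zW_f^\vset}{N_zW_g^\vset}$, and then apply (\ref{eq:prodcov}) to each of the three terms, using the prescribed moments $\expect{P_z}=\expect{N_z}=1$, $\variance{P_z}=\variance{N_z}=3$, and $\covariance{P_z}{N_z}=-3$. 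The two squared terms evaluate to $4\variance{W_f^\vset}+3\expect{W_f^\vset}^2$ and $4\variance{W_g^\vset}+3\expect{W_g^\vset}^2$, while the cross term yields $-2\covariance{W_f^\vset}{W_g^\vset}-3\expect{W_f^\vset}\expect{W_g^\vset}$.

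Collecting these contributions, I would obtain $\variance{W_h^{\vset\cup\{z\}}}=4\variance{W_f^\vset}+4\variance{W_g^\vset}-4\covariance{W_f^\vset}{W_g^\vset}+3(\expect{W_f^\vset}-\expect{W_g^\vset})^2$, where the expectation contributions combine via $3\expect{W_f^\vset}^2+3\expect{W_g^\vset}^2-6\expect{W_f^\vset}\expect{W_g^\vset}=3(\expect{W_f^\vset}-\expect{W_g^\vset})^2$. Solving this linear relation for $\covariance{W_f^\vset}{W_g^\vset}$ gives exactly the claimed identity. The computation is essentially mechanical; the only points deserving care are (i) justifying the additive split of $W_h^{\vset\cup\{z\}}$ by determinism rather than by a generic inclusion argument, and (ii) correctly invoking the independence of $(P_z,N_z)$ from $(W_f^\vset,W_g^\vset)$ so that (\ref{eq:prodcov}) legitimately applies to the cross term. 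I do not anticipate any substantive obstacle beyond this bookkeeping.
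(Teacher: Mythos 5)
Your proposal is correct and follows essentially the same route as the paper's proof: decompose $W_h^{\vset\cup\{z\}}=P_zW_f^\vset+N_zW_g^\vset$, expand the variance into the two squared terms and the cross term, evaluate each via (\ref{eq:prodcov}) with the prescribed moments of $(P_z,N_z)$, and solve the resulting linear relation for $\covariance{W_f^\vset}{W_g^\vset}$. The paper's version is merely terser; your explicit justification of the additive split by determinism and of the independence of $(P_z,N_z)$ from $(W_f^\vset,W_g^\vset)$ makes the same argument more careful, not different.
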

\begin{proof}
  Since $W_h^{\vset\cup\{z\}}\!=\!P_zW_f^\vset+N_zW_g^\vset$, $\variance{W_h^{\vset\cup\{z\}}}\!=\!\variance{P_zW_f^\vset}+\variance{N_zW_g^\vset}+2\covariance{P_zW_f^\vset}{N_zW_g^\vset}$ by (\ref{eq:sumcov}).
  We have $\variance{P_zW_f^\vset}=4\variance{W_f^\vset}+3(\expect{W_f^\vset})^2$, $\variance{N_zW_g^\vset}=4\variance{W_g^\vset}+3\expect{W_g^\vset}^2$, and $\covariance{P_zW_f^\vset}{N_zW_g^\vset}=-2\covariance{W_f^\vset}{W_g^\vset}-3\expect{W_f^\vset}\expect{W_g^\vset}$ by using (\ref{eq:prodcov}).
  Substituting each term leads to the equation in Lemma~\ref{lem:ite}.
\end{proof}
Lemma~\ref{lem:ite} demonstrates that we can obtain $\covariance{W_f}{W_g}$ by computing the variances of $W_f$,$W_g$, and $W_h$ and the expectations of $W_f$ and $W_g$.
If $f,g$ are given as FBDDs, we can easily construct the FBDD of $h$ by simply adding decision node $\vee$ at the root: $(z\wedge f)\vee(\neg z\wedge g)$, which does not break the restrictions of FBDDs.
This construction is also valid for d-DNNFs when $f,g$ are given as d-DNNFs.
Thus, \CVCquery can be answered by solving \VCquery when $f,g$ are given as d-DNNFs or FBDDs; they also admit the expectation computation because it amounts to ordinal WMC.
This indicates the intractability of \VCquery for d-DNNFs and FBDDs, proving Theorem~\ref{thm:intractability}.
Table~\ref{tb:tractability} summarizes the tractability of the queries.

\begin{table}[tb]
  \centering
  {\tabcolsep=3pt
  \begin{tabular}{ccccc}
    \toprule
    Query & st-d-DNNF & st-DNNF & d-DNNF & FBDD \\
    \midrule
    \VCquery & \checkmark (Thm.~\ref{thm:tractable}) & $\circ$ (Thm.~\ref{thm:intractability}) & $\circ$ (Thm.~\ref{thm:intractability}) & $\circ$ (Thm.~\ref{thm:intractability}) \\
    \CVCquery & $\checkmark^\ast$ (Thm.~\ref{thm:tractable}) & $\circ$ (Thm.~\ref{thm:intractability}) & $\circ$ (Thm.~\ref{thm:intractability}) & $\circ$ (Thm.~\ref{thm:intractability}) \\
    \CTquery & \checkmark & $\circ$ & \checkmark & \checkmark \\
    \SEquery & $\checkmark^\ast$ & $\circ$ & $\circ$ & $\circ$ \\
    \bottomrule
    \multicolumn{5}{c}{${}^\ast$Assuming that two st-d-DNNFs respect the same vtree.}
  \end{tabular}
  }
  \caption{Tractability of queries. \checkmark\ indicates that this query can be answered in polynomial time in the sizes of NNFs, and $\circ$ indicates that it cannot be answered in polynomial time unless P=NP.}
  \label{tb:tractability}
\end{table}

\section{Application for Bayesian Networks}
We introduce an application that considers the uncertainty in the inference of Bayesian networks.
A discrete Bayesian network represents a joint distribution over categorical random variables $\bnset\coloneqq\{X_1,\ldots,X_n\}$, where the range of $X_i$ is $\{x_{i1},\ldots,x_{ik_i}\}$.
Each random variable $X_i$ has parents $\parset_i\subseteq\bnset$, and the dependence structure is assumed to be acyclic.
The joint probability that $X_i$ takes value $x_i$ for $i=1,\ldots,n$ is described as $\prob(x_1,\ldots,x_n)=\prod_{i=1}^{n}\prob(x_i\vert\parval_i)$, where $\parval_i\coloneqq\{u_{i1},\ldots,u_{i\ell_i}\}$ is the set of values of parent variables $\parset_i$.
A marginal inference for a Bayesian network is to compute the marginal probability of \emph{partial assignments} that are the values of some random variables.

In an ordinal setting, every conditional distribution is characterized by a set of fixed parameters.
More specifically, distribution $\prob(x_i\vert\parval_i)$ for given parent values $\parval_i$ is a Bernoulli (for a binary-valued $X_i$) or a categorical (for a general $X_i$) distribution with fixed parameters.
Since these parameters are often learned from data, they may have uncertainty.
As explained in the Introduction, a Bayesian statistical approach to model the uncertainty is to introduce distributions, e.g., beta or Dirichlet distributions, for the parameters and regard the marginal probability as a random variable.
With our method, we can compute the variance of the marginal probability.
Our main result here is as follows.

\begin{theorem}
  \label{thm:bayesian}
  Given a Bayesian network with a constant treewidth, we can compute the variance of a marginal probability in polynomial time.
\end{theorem}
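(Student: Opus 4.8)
The plan is to reduce the computation of the marginal's variance to the \VCquery query on a suitably encoded Boolean function and then invoke Theorem~\ref{thm:tractable}. First I would apply a standard WMC encoding of the Bayesian network: introduce indicator variables for the values of each $X_i$ together with parameter variables for the conditional-probability entries, so that the WMC of the encoded Boolean function $f$ (conjoined with the indicators clamping the queried partial assignment) equals the marginal probability. I would give indicator variables deterministic weight $1$ and let the weights of the parameter variables be the (random) parameters $\prob(x_{ij}\mid\parval_i)$. Since the expectation of the WMC is the ordinary WMC, the random variable $W_f$ is precisely the marginal, and $\variance{W_f}$ is exactly the variance of the marginal we wish to compute.

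The crucial point is that the weight model must respect the dependence structure of the parameters. By the usual global and local parameter-independence assumptions, the parameters of two distinct conditional distributions are independent, so the cross-variable independence required by \VCquery holds between different conditional rows. Within a single row, however, the $k_i$ parameters $\prob(x_{i1}\mid\parval_i),\ldots,\prob(x_{ik_i}\mid\parval_i)$ follow one shared Dirichlet and are therefore correlated. For binary $X_i$ this is already covered, since the two entries are the positive and negative weights of a single Boolean variable and their Beta-induced correlation is exactly the $\sigma_{P_xN_x}$ term the query already permits. For general categorical $X_i$ I would encode each row as a single multivalued variable carrying a correlated weight vector and extend the decomposition Lemmas~\ref{lem:norelation}--\ref{lem:anddecomp} accordingly; this is the content of Appendix~\ref{app:multivalued}.

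It then remains to compile $f$ into a polynomial-size st-d-DNNF, and this is where the treewidth bound is used. A tree decomposition of width $w$ yields a variable elimination order, from which one derives a vtree along which $f$ can be compiled, e.g.\ as an SDD, a subclass of st-d-DNNFs, into a representation whose size is polynomial in $n$ whenever $w$ is constant, by standard knowledge-compilation arguments. Feeding this st-d-DNNF to the (multivalued extension of the) algorithm behind Theorem~\ref{thm:tractable} then yields $\variance{W_f}$ in time polynomial in the representation size, hence polynomial in $n$. I expect the main obstacle to be not the bounded-treewidth compilation, which is comparatively routine, but the multivalued extension: one must ensure that the within-row Dirichlet covariances are propagated correctly through the recursive decomposition without disturbing the variable-set bookkeeping (the LCA and vtree tracking) on which Lemmas~\ref{lem:norelation}--\ref{lem:anddecomp} rely.
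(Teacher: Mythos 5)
Your skeleton matches the paper's: encode the network with indicator and parameter variables so that the WMC equals the marginal, exploit parameter independence across conditional rows, handle within-row Dirichlet correlations, compile a polynomial-size st-d-DNNF using bounded treewidth, and invoke Theorem~\ref{thm:tractable}. But there is a genuine gap at exactly the point you flag and then defer: saying you would ``extend Lemmas~\ref{lem:norelation}--\ref{lem:anddecomp} accordingly'' names the entire difficulty without resolving it. The decomposition lemmas and the variable-set adjustments ($\adjexpfunc$, $\adjcovfunc$) multiply by $\expect{W_\mathtrue}$ and $\variance{W_\mathtrue}$ over gap variable sets, and both Eq.~(\ref{eq:prodcov}) and Lemma~\ref{lem:norelation} require the weights of distinct Boolean variables to be independent; under ENC1 the positive weights $P_{\theta_{x_{ij}\vert\parval_i}}$ of \emph{distinct} Boolean variables in the same row are correlated, so these formulas fail whenever a vtree split separates, or a $\mathtrue$-padding gap set contains, correlated parameter variables. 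The paper's resolution has three components your proposal lacks: (a) a structural restriction on the vtree (Condition~\ref{cond:vtree}) gathering each row's parameter variables into a subtree rooted at a vnode $\vnode_{x_i\vert\parval_i}$; (b) a proof (Observation~\ref{obs:singleton}) that, by determinism and the clauses (\ref{eq:indicator2})--(\ref{eq:parameter2}), every circuit node decomposed at such a vnode represents one of the one-hot or all-false functions $f_{x_{ij}\vert\parval_i}$, so the covariances between such nodes have the closed forms of Eq.~(\ref{eq:multicov}) and can serve as new base cases of Algorithm~\ref{alg:prod}; and (c) a verification that the gap sets arising in the adjustments never contain parameter variables, so the precomputed $\variance{W_\mathtrue}$ tables remain valid. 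Note that (b) is not automatic even granting (a): a priori a node at $\vnode_{x_i\vert\parval_i}$ could represent an arbitrary function of the row's variables (e.g., a disjunction of several one-hot assignments), for which no single covariance entry exists.

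Relatedly, your assessment that the bounded-treewidth compilation is ``comparatively routine'' while the multivalued extension is the obstacle misses that the two are entangled: a generic dtree/SDD compilation produces \emph{some} vtree, not one satisfying Condition~\ref{cond:vtree}, and without that condition step (b) has nothing to bite on. The paper's Proposition~\ref{prop:bayesian} does this work: it converts the jointree-based arithmetic circuit of \citet{darwiche03ac} into a d-DNNF and builds, by attaching per-row subtrees $\vtree_{x_i}$ to the jointree, a specific polynomial-size vtree that the circuit respects and that satisfies Condition~\ref{cond:vtree}. Your alternative --- one multivalued variable per row carrying a correlated weight vector --- is in spirit the same device (Condition~\ref{cond:vtree} plus Observation~\ref{obs:singleton} make the row subtree act as such a super-leaf), but as stated it is a reformulation rather than a proof: you would still need to redefine the WMC and the queries over mixed variables, re-prove the decomposition lemmas in that setting, and exhibit a polynomial-size compilation whose vtree is compatible with that variable grouping.
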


This theorem can be proved by using the existing WMC encoding~\citep{chavira08enc} of Bayesian networks and then compute the marginal probability's variance by our proposed algorithm.
Since the method for the general case is complicated, as it requires the slight relaxation of the independence assumption, we defer the details of the general method and the proof of Theorem~\ref{thm:bayesian} to Appendix~\ref{app:multivalued}.
Instead, we here explain a simpler method for the case where every random variable is binary-valued, i.e., $k_i=2$ for every $X_i$.
We use the encoding of \citet{sang05enc2}, referred to as ENC2 by \citet{chavira08enc}.

Before incorporating uncertainty, we explain the method for ordinal marginal inference.
For every random variable $X_i$, we prepare indicator variables $\lambda_{x_{i1}},\lambda_{x_{i2}}$; $\lambda_{x_{ij}}=\mathtrue$ when $X_i=x_{ij}$.
We set the following clauses:
\begin{equation}
  \lambda_{x_{i1}}\vee\lambda_{x_{i2}},\quad\neg\lambda_{x_{i1}}\vee\neg\lambda_{x_{i2}}. \label{eq:indicator}
\end{equation}
We also prepare parameter variable $\rho_{x_{i1}\vert\parval_i}$ for every pattern on parent values $\parval_i$ and set the following clauses:
\begin{equation}
  \begin{aligned}
    \lambda_{u_{i1}}\wedge\cdots\wedge\lambda_{u_{i\ell_i}}\wedge\rho_{x_{i1}\vert\parval_i} & \implies\lambda_{x_{i1}}, \\
    \lambda_{u_{i1}}\wedge\cdots\wedge\lambda_{u_{i\ell_i}}\wedge\neg\rho_{x_{i1}\vert\parval_i} & \implies\lambda_{x_{i2}}.
  \end{aligned} \label{eq:parameter}
\end{equation}
In addition, we set $P_{\rho}=1-N_{\rho}=\prob(x_{i1}\vert\parval_i)$ for every $\rho=\rho_{x_{i1}\vert\parval_i}$.
Let $f$ be a Boolean function that is a conjunction of all the clauses in (\ref{eq:indicator}) and (\ref{eq:parameter}).
Then the marginal probability given partial assignment $\evidence$ can be obtained in two ways:
(i) To prepare Boolean function $g_\evidence$, which is a conjunction of indicator variables corresponding to $\evidence$, and compute the WMC of $f\wedge g_\evidence$ with $P_{\lambda}=N_{\lambda}=1$ for every $\lambda=\lambda_{ij}$.
(ii) To set $P_{\lambda}=0$ for $\lambda=\lambda_{x_{ij}}$ such that $\evidence$ contains $x_{ij^\prime}\ (j^\prime\neq j)$, set all the other weights of the indicator variables to $1$, and then compute the WMC of $f$.
For example, when $\evidence=\{x_{11},x_{32}\}$, method (i) prepares $g_\evidence=\lambda_{x_{11}}\wedge\lambda_{x_{32}}$, while method (ii) sets $P_{\lambda}=0$ for $\lambda=\lambda_{x_{12}},\lambda_{x_{31}}$.

To incorporate uncertainty, we regard $P_x$ and $N_x$ as random variables.
Expectations $\mu_{P_x},\mu_{N_x}$ are set to the original weight values.
Since the weights of indicator variables $\lambda=\lambda_{x_{ij}}$ are determined regardless of the probability values, we set $\sigma_{P_{\lambda}}^2=\sigma_{N_{\lambda}}^2=\sigma_{P_{\lambda}N_{\lambda}}=0$.
For parameter variables $\rho=\rho_{x_{i1}\vert\parval_i}$, we consider variance $\sigma_{x_{i1}\vert\parval_i}^2$ of probability parameter $\prob(x_{i1}\vert\parval_i)$.
Since $P_{\rho}+N_{\rho}=1$, we set  $\sigma_{{P_{\rho}}}^2=\sigma_{N_{\rho}}^2=-\sigma_{P_{\rho}N_{\rho}}=\sigma_{x_{i1}\vert\parval_i}^2$.
By computing the variance of the WMC under this setting, we can compute the variance of the marginal.
Note that we here assume that each parameter is independent of the others because $(P_x,N_x)$ and $(P_y,N_y)$ ($x\neq y$) are independent, which is justified by the widely-adopted \emph{parameter independence} assumption~\citep{spiegelhalter90paraindep} when parameters are learned from data; see also~\citep{heckerman08bntutorial}.
In the following experiments, we empirically validate the tractability of the proposed algorithm and showcase the usage of variance computation with this encoding.

We finally mention that the variance of the \emph{conditional} probability of a Bayesian network can be \emph{approximately} obtained by using the \CVCquery query.
The conditional probability of $\evidence$ given condition $\condit$ equals $W_{h^\prime}/W_{h}$ with $h^\prime=f\wedge g_{\evidence}\wedge g_{\condit}$ and $h=f\wedge g_{\condit}$ using method (i), i.e., to prepare Boolean function $g_\evidence$.
Although we cannot precisely determine $\variance{W_{h^\prime}/W_{h}}$, by Taylor expansion~\citep{vankempen00frac} we have
$\variance{W_{h^\prime}/W_{h}}\approx\variance{W_{h^\prime}}/\{\expect{W_h}\}^2-2\covariance{W_{h^\prime}}{W_h}\expect{W_{h^\prime}}/\{\expect{W_h}\}^3+\variance{W_h}\{\expect{W_{h^\prime}}\}^2/\{\expect{W_h}\}^4$.

\section{Experiments}
In our experiment, we first confirmed the practical tractability of the proposed algorithm for st-d-DNNFs with an application for computing the variance of the marginal of Bayesian networks.
We used Bayesian networks from bnRep~\citep{leonelli25bnrep}, which collects networks from recent academic literature in various areas.
We retrieved all 70 binary Bayesian networks from bnRep.
The number of random variables ranges from 3 to 122.
We derived the CNF of $f$ with ENC2 by Ace v3.0 (\url{http://reasoning.cs.ucla.edu/ace/}) and compiled every CNF into a SDD, which is a subset of an st-d-DNNF, by the SDD package~\citep{choi13sdd}.
Given $p=\prob(x_{i1}\vert\parval_i)$ in the data, we set $\sigma_{x_{i1}\vert\parval_i}^2=p(1-p)/\theta$, which virtually considered $\prob(x_{i1}\vert\parval_i)$ follows $\text{Beta}((\theta-1)p,(\theta-1)(1-p))$.
We set $\theta=10$; note that the value of $\theta$ does not affect the computational time.
For parameters where $p=0$ or $1$, we set $\sigma_{x_{i1}\vert\parval_i}^2=0$ because $\prob(x_{i1}\vert\parval_i)$ should take a value within $[0,1]$, and thus the variance must be $0$ when the expectation is $0$ or $1$.
We chose one random variable from a Bayesian network as a partial assignment and computed the variance of the marginal by method (ii).
Note that the choices of the partial assignment and the expectations and (co)variances of weights do not affect the computational time since the size of the SDD representing $f$ remains unchanged.
The proposed method was implemented in C++ and compiled with g++-11.4.0.
All experiments were performed on a single thread of a Linux server with AMD EPYC 7763 CPU and 2048 GB RAM; note that we used less than 4 GB of memory during the experiments.
We reported the average consumed time for SDD compilation and variance computation over 10 runs for each network.
Codes and data to reproduce the experimental results are available at \url{https://github.com/nttcslab/variance-wmc}.

\begin{table}[tb]
  \centering
  {\footnotesize\tabcolsep=2pt
  \begin{tabular}{lrrrr}
    \toprule
    \multicolumn{1}{c}{Name} & \multicolumn{1}{c}{\#rv} & \multicolumn{1}{c}{$|\text{SDD}|$} & \multicolumn{1}{c}{Compile (s)} & \multicolumn{1}{c}{Variance (s)} \\
    \midrule
    projectmanagement & 26 & 3888 & 0.500 & 0.025 \\
    GDIpathway2 & 28 & 2755 & 0.784 & 0.021 \\
    grounding & 36 & 3397 & 2.387 & 0.017 \\
    engines & 12 & 1804 & 0.240 & 0.011 \\
    windturbine & 122 & 2043 & 1.380 & 0.009 \\
    \bottomrule
  \end{tabular}
  }
  \caption{Top-5 (out of 70) time-consuming networks. ``\#rv'' is the number of random variables. ``$|\text{SDD}|$'' is the size of compiled SDD. ``Compile'' and ``Variance'' indicate the time required to compile SDD and compute variance.}
  \label{tb:experiment0}
\end{table}

As a result, after the SDD was compiled, the variance computation only took 0.025 sec at maximum, recorded for ``projectmanagement'' network whose SDD size was 3,888.
Even if the SDD compilation is added to the computational time, it took only 10 sec to process the most time-consuming ``propellant'' network.
Table~\ref{tb:experiment0} shows the top-5 networks in descending order of the variance computation times.
This indicates the practical tractability of the proposed algorithm.
More detailed results can be found in Appendix~\ref{app:experiment}.

\begin{figure}[t]
  \centering
  \begin{minipage}{0.4\columnwidth}
    \centering
    \includegraphics[keepaspectratio]{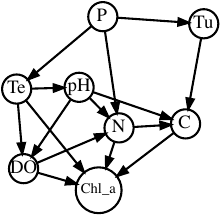}
    \caption{The ``algalactivity2'' network.}
    \label{fig:experimentbn}
  \end{minipage}\noindent
  \hspace{0.02\columnwidth}
  \begin{minipage}{0.55\columnwidth}
      \centering
      {\footnotesize\tabcolsep=2pt
      \begin{tabular}{lr}
        \toprule
        \multicolumn{1}{c}{Parameter} & \multicolumn{1}{c}{Variance} \\
        \midrule
        $\text{DO}\vert\text{pH}_0,\text{Te}_0$ & 0.002887 \\
        $\text{Chl\_a}\vert\text{C}_1,\text{DO}_0,\text{N}_0,\text{Te}_1$ & 0.003532 \\
        $\text{Te}\vert\text{P}_0$ & 0.003554 \\
        $\text{pH}\vert\text{Te}_0$ & 0.003592 \\
        $\text{Chl\_a}\vert\text{C}_1,\text{DO}_1,\text{N}_1,\text{Te}_0$ & 0.003674 \\
        \midrule
        (none) & 0.003904 \\
        \bottomrule
      \end{tabular}
      }
      \captionof{table}{Variance of $\prob(\text{Chl\_a}\!=\!0)$ when one parameter's variance is reduced to one-tenth. Top-5 parameters in ascending order of variance was exhibited.}
      \label{tb:showcase}
  \end{minipage}\noindent
\end{figure}

Next, we showcased the usage of variance computations with the ``algalactivity2'' network from bnRep (Fig.~\ref{fig:experimentbn}).
Each random variable in Fig.~\ref{fig:experimentbn} is valued either $0$ or $1$.
With the same setting as the above experiment, the mean and variance of $\prob(\text{Chl\_a}\!=\!0)$ are computed as 0.5281 and 0.003904.
Since the standard deviation is 0.06248, the variance will affect the decision-making that depends on, e.g., whether $\prob(\text{Chl\_a}\!=\! 0)\leq 0.55$.
To conduct more robust decision-making, we want to reduce the variance of the marginal.
One approach is to decrease the variance of the parameters by collecting more observations (data).
However, since it is costly to collect observations corresponding to all the parameters, we want to find parameters that are effective for reducing the variance of the marginal.
Thus, we additionally demonstrated how much the variance of this marginal is decreased by reducing the variance of one parameter to one-tenth.
We conducted the above demonstration for each of the 43 parameters in the network.
Table~\ref{tb:showcase} shows the top-5 parameters in the reduction of the variance of $\prob(\text{Chl\_a}\!=\! 0)$.
In other words, it shows the top-5 parameters having greater impact on the variance of the marginal.
Here, $\text{RV}_j$ stands for $\text{RV}\!=\!j$; e.g., $\text{DO}\vert\text{pH}_0,\text{Te}_0$ denotes parameter $\prob(\text{DO}\vert\text{pH}\!=\!0,\text{Te}\!=\!0)$.
It is notable that, among the 43 parameters, those having greater impact on the variance of $\prob(\text{Chl\_a}\!=\!0)$ are not only the conditional probabilities of Chl\_a but also those of the other random variables.
We realized that reducing the variance of the parameters in Table~\ref{tb:showcase} efficiently decreased the variance of the marginal.
These results suggest us that we cannot reveal what parameters have greater impact on the variance of the inference result without actually computing the variance.
We give more examples of the variance computation in Appendix~\ref{app:experiment}.

\section{Conclusion}
We defined a query for computing the WMC's variance.
We proved that this query is tractable for st-d-DNNFs and intractable for st-DNNFs, d-DNNFs, and FBDDs; the tractability was shown by presenting an algorithm to solve the query.
We also showed an application for quantifying the uncertainty in the inference on Bayesian networks.
Future directions include the computation of more involved WMC statistics, such as higher-order moments.
We should also investigate the tractability of \VCquery and \CVCquery queries for emerging classes of representations, such as and-sum circuits~\citep{onaka25stdasc}.

\bibliographystyle{abbrvnat}
\bibliography{mybib}

\appendix

\section{Preprocessing and Expectation}
\label{app:preprocessing}
In this appendix, we show the details of the preprocessing and adjust functions ($\adjexpfunc$ and $\adjcovfunc$) as well as a procedure for computing expectations.

We need an adjustment of variable sets when we have the value of expectation $\expect{W_f^{\vset^\prime}}$ or covariance $\covariance{W_f^{\vset^\prime}}{W_g^{\vset^\prime}}$ on variable set $\vset^\prime$ and must compute $\expect{W_f^{\vset^{\prime\prime}}}$ or $\covariance{W_f^{\vset^{\prime\prime}}}{W_g^{\vset^{\prime\prime}}}$, where $\vset^{\prime}\subsetneq\vset^{\prime\prime}$.
Particularly, in Algorithm~\ref{alg:prod}, we need such adjustments where $\vset^\prime=\scope(\vnode)$ and $\vset^{\prime\prime}=\scope(\wnode)$ for vnodes $\vnode,\wnode$, where $\wnode$ is an ancestor of $\vnode$.
The key lemma is as follows.
\begin{lemma}
  \label{lem:preprocessing}
  Given Boolean functions $f,g$ on variable sets $\vset^\prime$ and $\vset^\prime\subseteq\vset^{\prime\prime}$, we have
  \begin{align}
    \expect{W_f^{\vset^{\prime\prime}}} & = \expect{W_{\mathtrue}^{\vset^{\prime\prime}\setminus\vset^\prime}}\expect{W_f^{\vset^\prime}}, \label{eq:preprocessexp} \\
    \covariance{W_f^{\vset^{\prime\prime}}}{W_g^{\vset^{\prime\prime}}} & = \variance{W_\mathtrue^{\vset^{\prime\prime}\setminus\vset^\prime}}\covariance{W_f^{\vset^\prime}}{W_g^{\vset^\prime}} \nonumber \\
    & + \variance{W_\mathtrue^{\vset^{\prime\prime}\setminus\vset^\prime}}\expect{W_f^{\vset^\prime}}\expect{W_g^{\vset^\prime}} \label{eq:preprocesscov} \\
    & + \{\expect{W_\mathtrue^{\vset^{\prime\prime}\setminus\vset^\prime}}\}^2\covariance{W_f^{\vset^\prime}}{W_g^{\vset^\prime}}. \nonumber
  \end{align}
\end{lemma}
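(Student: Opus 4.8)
The plan is to reduce both identities to a single structural observation, namely that a WMC on the larger variable set factorizes as a product of two WMCs on disjoint variable sets, and then to invoke the already-established product rules for expectation and for covariance. Concretely, I would first argue that $W_f^{\vset^{\prime\prime}}=W_f^{\vset^\prime}\,W_{\mathtrue}^{\vset^{\prime\prime}\setminus\vset^\prime}$. Since $f$ constrains only the variables in $\vset^\prime$, every model $a$ of $f$ on $\vset^{\prime\prime}$ arises by pairing a model $a^\prime$ of $f$ on $\vset^\prime$ with an arbitrary assignment $a^{\prime\prime}$ of the free variables $\vset^{\prime\prime}\setminus\vset^\prime$, and the weight splits as $W_a^{\vset^{\prime\prime}}=W_{a^\prime}^{\vset^\prime}\,W_{a^{\prime\prime}}^{\vset^{\prime\prime}\setminus\vset^\prime}$ by the definition in (\ref{eq:wmc}). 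Summing over all such pairs and distributing the sum across the product yields the claimed factorization, where the second factor is the WMC of the identically-$\mathtrue$ function on $\vset^{\prime\prime}\setminus\vset^\prime$.

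The second step is to observe that the two factors are independent random variables: $W_f^{\vset^\prime}$ is a function only of the weights $\{(P_x,N_x):x\in\vset^\prime\}$, while $W_{\mathtrue}^{\vset^{\prime\prime}\setminus\vset^\prime}$ is a function only of $\{(P_x,N_x):x\in\vset^{\prime\prime}\setminus\vset^\prime\}$, and these two weight families are disjoint, hence independent by the standing assumption that $(P_x,N_x)$ and $(P_y,N_y)$ are independent for $x\neq y$. Equation (\ref{eq:preprocessexp}) then follows at once, since the expectation of a product of independent random variables factorizes into the product of expectations.

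For (\ref{eq:preprocesscov}) I would apply the product covariance formula (\ref{eq:prodcov}) with $A=W_f^{\vset^\prime}$, $B=W_g^{\vset^\prime}$, and $X=Y=W_{\mathtrue}^{\vset^{\prime\prime}\setminus\vset^\prime}$. The same factorization applies to $W_g^{\vset^{\prime\prime}}$, and the pair $(A,B)$ depends only on the $\vset^\prime$-weights while $(X,Y)$ depends only on the $(\vset^{\prime\prime}\setminus\vset^\prime)$-weights, so the independence hypothesis of (\ref{eq:prodcov}) is met. Substituting $\covariance{X}{Y}=\variance{W_{\mathtrue}^{\vset^{\prime\prime}\setminus\vset^\prime}}$ and $\expect{X}\expect{Y}=\{\expect{W_{\mathtrue}^{\vset^{\prime\prime}\setminus\vset^\prime}}\}^2$ (both because $X=Y$) into the three terms of (\ref{eq:prodcov}) reproduces exactly the three terms of (\ref{eq:preprocesscov}) after reordering.

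I do not expect a serious obstacle: the only point requiring genuine care is the first step, i.e.\ rigorously justifying the factorization $W_f^{\vset^{\prime\prime}}=W_f^{\vset^\prime}W_{\mathtrue}^{\vset^{\prime\prime}\setminus\vset^\prime}$ together with the independence of its two factors, since everything afterward is a mechanical substitution into the formulas (\ref{eq:sumcov})--(\ref{eq:prodcov}). Once the variable set is partitioned into $\vset^\prime$ and $\vset^{\prime\prime}\setminus\vset^\prime$, the disjointness of the governing weights makes the independence automatic, so the bulk of the argument is bookkeeping rather than any real difficulty.
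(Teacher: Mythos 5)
Your proposal is correct and follows essentially the same route as the paper's proof: factorize $W_f^{\vset^{\prime\prime}}=W_{\mathtrue}^{\vset^{\prime\prime}\setminus\vset^\prime}W_f^{\vset^\prime}$ (the paper writes this as representing $f$ on $\vset^{\prime\prime}$ as $\mathtrue^{\vset^{\prime\prime}\setminus\vset^\prime}\wedge f$), note that $W_{\mathtrue}^{\vset^{\prime\prime}\setminus\vset^\prime}$ and $(W_f^{\vset^\prime},W_g^{\vset^\prime})$ are independent because they depend on disjoint weight families, then obtain (\ref{eq:preprocessexp}) from the product rule for expectations of independent variables and (\ref{eq:preprocesscov}) by substituting into (\ref{eq:prodcov}) with $X=Y=W_{\mathtrue}^{\vset^{\prime\prime}\setminus\vset^\prime}$. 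Your model-pairing justification of the factorization is merely a more explicit spelling-out of the paper's one-line step, not a different argument.
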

\begin{proof}
  On variable set $\vset^{\prime\prime}$, we can represent $f$ as $\mathtrue^{\vset^{\prime\prime}\setminus\vset^\prime}\wedge f$, where $\mathtrue^{\vset^{\prime\prime}\setminus\vset^\prime}$ is a $\mathtrue$ Boolean function on variable set $\vset^{\prime\prime}\setminus\vset^\prime$.
  We can also represent $g$ on $\vset^{\prime\prime}$ as $\mathtrue^{\vset^{\prime\prime}\setminus\vset^\prime}\wedge g$.
  Thus, we have $W_f^{\vset^{\prime\prime}}=W_{\mathtrue}^{\vset^{\prime\prime}\setminus\vset^\prime}W_f^{\vset^\prime}$ and $W_g^{\vset^{\prime\prime}}=W_{\mathtrue}^{\vset^{\prime\prime}\setminus\vset^\prime}W_f^{\vset^\prime}$, where $W_{\mathtrue}^{\vset^{\prime\prime}\setminus\vset^\prime}$ and $(W_f^{\vset^\prime},W_g^{\vset^\prime})$ are independent.
  Eq.~(\ref{eq:preprocessexp}) follows from the formula for the expectation of independent random variables and (\ref{eq:preprocesscov}) follows from (\ref{eq:prodcov}).
\end{proof}

\begin{algorithm}[tb]
\caption{Preprocessing and adjustment functions}
\label{alg:preprocess}
{\footnotesize
\ForEach{$\vnode$: vnode of $\vtree$ in a bottom-up order}{
  \uIf(\tcp*[f]{$x$: label of $\vnode$}){$\vnode$ is a leaf node}{
    $\vexpvar[\vnode]\leftarrow\mu_{P_x}+\mu_{N_x}$ \tcp*{$\vexpvar[\vnode]=\expect{W_{\mathtrue}^{\scope(\vnode)}}$}
    $\vvarvar[\vnode]\leftarrow\sigma^2_{P_x}+\sigma^2_{N_x}+2\sigma_{P_x N_x}$ \tcp*{$\vvarvar[\vnode]=\variance{W_{\mathtrue}^{\scope(\vnode)}}$}
  }
  \Else{
    $\vexpvar[\vnode]\leftarrow\vexpvar[\vnode^l]\cdot\vexpvar[\vnode^r]$\;
    $\vvarvar[\vnode]\leftarrow\vvarvar[\vnode^l]\!\cdot\!\vvarvar[\vnode^r]\!+\!\vvarvar[\vnode^l]\!\cdot\!(\vexpvar[\vnode^r])^2\!+\!(\vexpvar[\vnode^l])^2\!\cdot\!\vvarvar[\vnode^r]$\;
  }
}
\ForEach{$\vnode$: vnode of $\vtree$}{
  $\adjexpvar[\vnode,\vnode]\leftarrow 1$, $\adjvarvar[\vnode,\vnode]\leftarrow 0$ \tcp*{$\adjexpvar[\wnode,\vnode]=\expect{W_{\mathtrue}^{\scope(\wnode)\setminus\scope(\vnode)}}$}
  $\wnode\leftarrow\vnode$ \tcp*{$\adjvarvar[\wnode,\vnode]=\variance{W_{\mathtrue}^{\scope(\wnode)\setminus\scope(\vnode)}}$}
  \While{$\wnode$ is not a root node of $\vtree$}{
    $\pnode\leftarrow\text{parent of $\wnode$}$\;
    \lIf{$\pnode^l=\wnode$}{$\mathtt{etmp}\leftarrow\vexpvar[\pnode^r]$, $\mathtt{vtmp}\leftarrow\vvarvar[\pnode^r]$}
    \lElse{$\mathtt{etmp}\leftarrow\vexpvar[\pnode^l]$, $\mathtt{vtmp}\leftarrow\vvarvar[\pnode^l]$}
    $\adjexpvar[\pnode,\vnode]\leftarrow\adjexpvar[\wnode,\vnode]\cdot\mathtt{etmp}$\;
    $\adjvarvar[\pnode,\vnode]\leftarrow\adjvarvar[\wnode,\vnode]\!\cdot\!\mathtt{vtmp}\!+\!\adjvarvar[\wnode,\vnode]\!\cdot\!(\mathtt{etmp})^2\!+\!(\adjexpvar[\wnode,\vnode])^2\!\cdot\!\mathtt{vtmp}$\;
    $\wnode\leftarrow\pnode$\;
  }
}
\Input{Vnodes $\wnode,\vnode$, $\mathtt{val}=\expect{W_f^{\scope(\vnode)}}$ for some function $f$}
\Output{$\expect{W_f^{\scope(\wnode)}}$\tcp*{Suppose $\scope(\vnode)\subseteq\scope(\wnode)$}}
\Func{$\adjexpfunc(\wnode,(\vnode,\mathtt{val}))$}{
  \lIf(\tcp*[f]{$f\in\{\mathtrue,\mathfalse\}$}){$\vnode=\bot$}{\Return $\vexpvar[\wnode]\cdot\mathtt{val}$}
  \lElse{\Return $\adjexpvar[\wnode,\vnode]\cdot\mathtt{val}$}
}
\Input{Vnodes $\wnode,\vnode,\mathsf{vf},\mathsf{vg}$, $\mathtt{val}=\covariance{W_f^{\scope(\vnode)}}{W_g^{\scope(\vnode)}}$, $\mathtt{fexp}=\expect{W_f^{\scope(\mathsf{vf})}}$, $\mathtt{gexp}=\expect{W_g^{\scope(\mathsf{vg})}}$ for some $f,g$}
\Output{$\covariance{W_f^{\scope(\wnode)}}{W_g^{\scope(\wnode)}}$ \tcp*{Suppose $\scope(\vnode)\subseteq\scope(\wnode)$}}
\Func{$\adjcovfunc(\wnode,(\vnode,\mathtt{val}),(\mathsf{vf},\mathtt{fexp}),(\mathsf{vg},\mathtt{gexp}))$}{
  \lIf(\tcp*[f]{$f,g\!\in\!\{\mathtrue,\mathfalse\}$}){$\vnode=\bot$}{\Return $\vvarvar[\wnode]\!\cdot\!\mathtt{fexp}\!\cdot\!\mathtt{gexp}$}
  $\mathtt{atmp}\!\leftarrow\!\adjexpfunc(\vnode,(\mathsf{vf},\mathtt{fexp}))$ \tcp*{Suppose $\scope(\mathsf{vf})\!\subseteq\!\scope(\vnode)$}
  $\mathtt{btmp}\!\leftarrow\!\adjexpfunc(\vnode,(\mathsf{vg},\mathtt{gexp}))$ \tcp*{Suppose $\scope(\mathsf{vg})\!\subseteq\!\scope(\vnode)$}
  \Return $\adjvarvar[\wnode,\vnode]\!\cdot\!\mathtt{val}\!+\!\adjvarvar[\wnode,\vnode]\!\cdot\!\mathtt{atmp}\!\cdot\!\mathtt{btmp}\!+\!(\adjexpvar[\wnode,\vnode])^2\!\cdot\!\mathtt{val}$\;
}
}
\end{algorithm}

Thus, we can compute the adjustment in constant time by pre-computing $\expect{W_\mathtrue^{\scope(\wnode)\setminus\scope(\vnode)}}$ and $\variance{W_\mathtrue^{\scope(\wnode)\setminus\scope(\vnode)}}$ for every $\vnode$ and every ancestor $\wnode$ of $\vnode$.
Algorithm~\ref{alg:preprocess} (lines 1--17) pre-computes the expectation and variance of $W_\mathtrue^{\scope(\wnode)\setminus\scope(\vnode)}$ for every $\vnode,\wnode$.
Lines 1--7 compute the expectation and variance of $W_\mathtrue^{\scope(\vnode)}$ for every $\vnode$ and store the computed values in $\vexpvar[\vnode]$ and $\vvarvar[\vnode]$.
We can easily derive these values when $\vnode$ is a leaf.
When $\vnode$ is an internal vnode, we can use Lemma~\ref{lem:preprocessing} because $\mathtrue^{\scope(\vnode)}=\mathtrue^{\scope(\vnode^l)}\wedge\mathtrue^{\scope(\vnode^r)}$.
Thus, we process the vnodes in the order where the deeper vnode comes earlier.
Lines 8--17 compute the expectation and variance of $W_\mathtrue^{\scope(\wnode)\setminus\scope(\vnode)}$ for every $\vnode,\wnode$ and store the computed values in $\adjexpvar[\wnode,\vnode]$ and $\adjvarvar[\wnode,\vnode]$.
For every vnode $\vnode$, we start with $\wnode=\vnode$ and traverse the ancestors one by one until $\wnode$ reaches the root vnode.
We again use Lemma~\ref{lem:preprocessing} to compute $\adjexpvar[\wnode,\vnode]$ and $\adjvarvar[\wnode,\vnode]$ one by one.

Using $\vexpvar,\vvarvar,\adjexpvar$, and $\adjvarvar$, we can adjust the expectation and covariance by $\adjexpfunc$ in lines 18--20 and $\adjcovfunc$ in lines 21--25, which again uses Lemma~\ref{lem:preprocessing}.
In $\adjcovfunc$, we also need the expectations of $W_f$ and $W_g$ because (\ref{eq:preprocesscov}) requires $\expect{W_f}$ and $\expect{W_g}$.

The preprocessing requires $\order{|\vset|^2}$ time; in lines 8--17, traversing the ancestors for every $\vnode$ takes $\order{|\vset|}$ time, which in total requires $\order{|\vset|^2}$ time.
After preprocessing, $\adjexpfunc$ and $\adjcovfunc$ can be computed in constant time.

\begin{algorithm}[tb]
\caption{$\expfunc(\alpha)$: computing $\expect{W_{f_\alpha}}$ for every node}
\label{alg:expect}
{\footnotesize
\Input{St-d-DNNF $\alpha$}
\Output{Pair of $\vnode=\dnode(\alpha)$ and $\expect{W_{f_\alpha}^{\scope(\vnode)}}$}
\lIf(\tcp*[f]{Cache for $\expfunc(\alpha)$}){$\expvar[\alpha]\neq \textrm{null}$}{\Return $\expvar[\alpha]$}
$\vnode\leftarrow\dnode(\alpha)$\;
\lIf(\tcp*[f]{Base cases: $\alpha$ is a leaf node}){$\alpha=\mathfalse$}{$\resvar\leftarrow 0$}
\lElseIf {$\alpha=\mathtrue$}{$\resvar\leftarrow 1$}
\lElseIf {$\alpha=x$}{$\resvar\leftarrow \mu_{P_x}$}
\lElseIf {$\alpha=\neg x$}{$\resvar\leftarrow \mu_{N_x}$}
\uElseIf(\tcp*[f]{$\alpha_1,\ldots,\alpha_k$: child nodes of $\alpha$}){$\alpha$ is a $\vee$-node}{
  $\resvar\leftarrow\sum_{j=1}^{k}\adjexpfunc(\vnode,\expfunc(\alpha_i))$\;
}
\Else(\tcp*[f]{$\alpha$ is a $\wedge$-node}){
  $\alpha^l,\alpha^r\leftarrow(\text{child nodes of $\alpha$})$ s.t. $\scope(\alpha^l)\subseteq\scope(\vnode^l)$ and $\scope(\alpha^r)\subseteq\scope(\vnode^r)$\;
  $\resvar\leftarrow\adjexpfunc(\vnode^l,\expfunc(\alpha^l))\cdot\adjexpfunc(\vnode^r,\expfunc(\alpha^r))$\;
}
$\expvar[\alpha]\leftarrow (\vnode,\resvar)$\;
\Return $\expvar[\alpha]$
}
\end{algorithm}

Using $\adjexpfunc$, we can also compute expectation $\expect{W_{f_\gamma}}$ for every node $\gamma$ in the same way as the standard algorithm for computing an ordinal WMC.
Algorithm~\ref{alg:expect} describes the procedure.
When $\alpha$ is a $\vee$-node, we can compute the expectation of $f_\alpha$ by summing up the expectations of the child nodes due to determinism, i.e., $\expect{W_{f_\alpha}^\vset}=\sum_{i=1}^{k}\expect{W_{f_{\alpha_i}}^\vset}$, which is reflected in lines 7 and 8.
Here, to adjust the variable sets, we use the $\adjexpfunc$ function.
When $\alpha$ is a $\wedge$-node, we can compute the expectation of $f_\alpha$ as the product of the expectations of both child nodes, as reflected in lines 9--11.
For st-d-DNNF $\alpha$, computing the expectations for all the descendant nodes of $\alpha$ takes $\order{|\alpha|}$ time after preprocessing.

\section{Bayesian Networks with Multi-Valued Random Variables}
\label{app:multivalued}
In this appendix, we describe how to compute the variance of the marginal probability of a Bayesian network when not every variable is binary.
In this case, we use the encoding of~\citet{darwiche02enc1}, denoted as ENC1 by~\citet{chavira08enc}.
We also prove Theorem~\ref{thm:bayesian} stated in the main text for general cases using this encoding.

\subsection{Encoding and Algorithm}
We first describe the method for ordinal inference using ENC1.
The indicator variables are prepared in the same way: for every random variable $X_i$, we prepare $\lambda_{x_{i1}},\ldots,\lambda_{x_{ik_i}}$; $\lambda_{x_{ij}}=\mathtrue$ when $X_i=x_{ij}$.
We set the following clauses,
\begin{equation}
  \begin{aligned}
    & \lambda_{x_{i1}}\vee\cdots\vee\lambda_{x_{ik_i}}, \\
    & \neg\lambda_{x_{ij}}\vee\neg\lambda_{x_{ij^\prime}}\quad (j\neq j^\prime),\\
  \end{aligned}\label{eq:indicator2}
\end{equation}
ensuring that, among $\lambda_{x_{i1}},\ldots,\lambda_{x_{ik_i}}$, exactly one variable is set to $\mathtrue$.
The parameter variables are prepared differently.
We prepare parameter variable $\theta_{x_{ij}\vert\parval_i}$ for every $j\in\{1,\ldots,k_i\}$ and every pattern on parent values $\parval_i$, and set the following clause:
\begin{equation}
  \lambda_{u_{i1}}\wedge\cdots\lambda_{u_{i\ell_i}}\wedge\lambda_{x_{ij}} \iff \theta_{x_{ij}\vert\parval_i}. \label{eq:parameter2}
\end{equation}
Let $f$ be a Boolean function that is a conjunction of all the clauses in (\ref{eq:indicator2}) and (\ref{eq:parameter2}).
For parameter variables, we set $N_\theta=1$ and $P_\theta=\prob(x_{ij}\vert\parval_i)$ for $\theta=\theta_{x_{ij}\vert\parval_i}$.
Then, the marginal probability of $\evidence$ can be obtained in the same way as the binary-valued case:
either (i) to prepare a Boolean function $g_\evidence$ that is a conjunction of indicator variables corresponding to $\evidence$ and compute the WMC of $f\wedge g_\evidence$ with $P_{\lambda}=N_{\lambda}=1$ for every $\lambda=\lambda_{ij}$, or (ii) to set $P_{\lambda}=0$ for $\lambda=\lambda_{x_{ij}}$ such that $\evidence$ contains $x_{ij^\prime}\ (j^\prime\neq j)$, set all other weights of indicator variables to $1$, and then compute the WMC of $f$.

To obtain the variance of the marginal probability, we must cope with the correlation between the positive weights of \emph{different} parameter variables even under the parameter independence assumption~\citep{spiegelhalter90paraindep}.
For $\theta=\theta_{x_{ij}\vert\parval_i}$ and $\theta^\prime=\theta_{x_{ij^\prime}\vert\parval_i}$ $(j\neq j^\prime)$, $P_\theta=\prob(x_{ij}\vert\parval_i)$ and $P_{\theta^\prime}=\prob(x_{ij^\prime}\vert\parval_i)$.
In general, these probabilities should be correlated since $\sum_{j=1}^{k_i}\prob(x_{ij}\vert\parval_i)=1$.
For example, if $\prob(x_i\vert\parval_i)$ follows a Dirichlet distribution, $P_\theta$ and $P_{\theta^\prime}$ should be correlated and thus have a non-zero covariance.
More specifically, under this situation we set $\mu_{P_x}$ and $\mu_{N_x}$ to the original parameter values for every Boolean variable, $\sigma_{P_\lambda}^2=\sigma_{N_\lambda}^2=\sigma_{P_\lambda N_\lambda}=0$ for $\lambda=\lambda_{x_{ij}}$, and $\sigma_{P_\theta}^2=\variance{\prob(x_{ij}\vert\parval_i)}$ and  $\sigma_{N_\theta}^2=\sigma_{P_\theta N_\theta}=0$ (since $N_\theta$ is an exact value) for $\theta=\theta_{x_{ij}\vert\parval_i}$.
In addition, we also have $\covariance{P_\theta}{P_{\theta^\prime}}=\covariance{\prob(x_{ij}\vert\parval_i)}{\prob(x_{ij^\prime}\vert\parval_i)}$ for $\theta=\theta_{x_{ij}\vert\parval_i}$ and $\theta^\prime=\theta_{x_{ij^\prime}\vert\parval_i}$ $(j\neq j^\prime)$.
Therefore, we cannot straightforwardly apply the proposed algorithm.

We address this situation by restricting the vtree shape and considering the characteristics of the Boolean function $f$.
Later, we show that the restriction on the shape of a vtree does not incur any theoretical burdens.
We consider the following restriction.
\begin{condition}
  \label{cond:vtree}
  For any $i$ and $\parval_i$, vtree node $\vnode_{x_i\vert\parval_i}$ exists such that $\scope(\vnode_{x_i\vert\parval_i})=\{\theta_{x_{i1}\vert\parval_{i}},\ldots,$ $\theta_{x_{ik_i}\vert\parval_{i}}\}$.
  In other words, all the parameter variables having the same parent value $\parval_i$ are gathered in a subtree rooted at $\vnode_{x_i\vert\parval_i}$.
\end{condition}
We also assume that the whole st-d-DNNF has no $\mathfalse$ leaf unless we represent a $\mathfalse$ Boolean function.
We can ensure this by recursively replacing an internal node that has a $\mathfalse$ child node.
If $\wedge$-node $\alpha$ has $\mathfalse$ as a child node, we simply replace $\alpha$ with $\mathfalse$.
If $\vee$-node $\alpha$ has $\mathfalse$ as a child node, we simply remove this child.
If the $\vee$-node has no child node after the above removal, we replace it with $\mathfalse$.
Eventually, no $\mathfalse$ leaves remain, or the root becomes $\mathfalse$ when the represented Boolean function is $\mathfalse$.

Now we make the following observation.
Let $f_{x_{ij}\vert\parval_i}\coloneqq \theta_{x_{ij}\vert\parval_i}\wedge(\bigwedge_{j^\prime\neq j}\neg\theta_{x_{ij^\prime}\vert\parval_i})$ and $f_{x_{i0}\vert\parval_i}\coloneqq\bigwedge_{j^\prime}\neg\theta_{x_{ij^\prime}\vert\parval_i}$.
In other words, $f_{x_{ij}\vert\parval_i}$ corresponds to an assignment where $\theta_{x_{ij}\vert\parval_i}$ is $\mathtrue$ and all the other $\theta_{x_{ij^\prime}\vert\parval_i}$s are $\mathfalse$, and $f_{x_{i0}\vert\parval_i}$ corresponds to an assignment where all $\theta_{x_{ij}\vert\parval_i}$s are $\mathfalse$.
\begin{observation}
  \label{obs:singleton}
  Consider the st-d-DNNF of $f$ respecting a vtree that satisfies Condition~\ref{cond:vtree}, where $f$ is the conjunction of all clauses in (\ref{eq:indicator2}) and (\ref{eq:parameter2}).
  Then, for any node $\alpha$ with $\dnode(\alpha)=\vnode_{x_i\vert\parval_i}$, $f_\alpha=f_{x_{ij}\vert\parval_i}$ for some $j\in\{0,\ldots,k_i\}$.
\end{observation}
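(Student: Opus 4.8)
The plan is to first pin down how the parameter-variable group behaves inside every model of $f$, and then to transport that behavior to an arbitrary node $\alpha$ with $\dnode(\alpha)=\vnode_{x_i\vert\parval_i}$ by exploiting the separation that Condition~\ref{cond:vtree} guarantees. Write $C_i\coloneqq\lambda_{u_{i1}}\wedge\cdots\wedge\lambda_{u_{i\ell_i}}$ for the conjunction of parent indicators appearing in (\ref{eq:parameter2}), so that (\ref{eq:parameter2}) reads $\theta_{x_{ij}\vert\parval_i}\iff C_i\wedge\lambda_{x_{ij}}$. First I would record two consequences of the clauses. From the exactly-one clauses (\ref{eq:indicator2}) on the $\lambda_{x_{i\cdot}}$ together with these biconditionals, in \emph{every} model of $f$ the value of each $\theta_{x_{ij}\vert\parval_i}$ is a deterministic function of the indicator variables, and the whole group is forced into a one-hot-or-all-false pattern: exactly $\theta_{x_{ij}\vert\parval_i}$ is true when $C_i$ holds and $\lambda_{x_{ij}}$ is the chosen indicator, while all of them are false when $C_i$ fails. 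Thus the only group patterns consistent with $f$ are the $f_{x_{ij}\vert\parval_i}$ for $j\in\{0,\ldots,k_i\}$.

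Next I would use structured decomposability at the vnode $\vnode_{x_i\vert\parval_i}$, whose scope is exactly the group by Condition~\ref{cond:vtree}. Because this vnode separates the group from all remaining variables, the circuit for $f$ factorizes across this cut as $f=\bigvee_t (h_t\wedge g_t)$, where each $g_t$ has scope inside the group, each $h_t$ has scope in the complementary variable set, and the functions $g_t$ are precisely the functions $f_\alpha$ realized at the nodes $\alpha$ with $\dnode(\alpha)=\vnode_{x_i\vert\parval_i}$. The no-$\mathfalse$-leaf normalization assumed before the observation makes every such $h_t$ and $g_t$ satisfiable, and each disjunct entails $f$.

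The core step is then a short contradiction argument carried out at a fixed node. Let $\alpha$ satisfy $\dnode(\alpha)=\vnode_{x_i\vert\parval_i}$, put $g\coloneqq f_\alpha$, and fix a model $n$ of the companion $h_t$. I would show that $g$ constrains every variable of the group and allows only one pattern. Suppose some $\theta_{x_{ij}\vert\parval_i}$ is either left unconstrained by $g$ or appears true in one model of $g$ and false in another; combining each choice with the single fixed assignment $n$ yields two models of $f$, since $h_t\wedge g\models f$. Evaluating $\theta_{x_{ij}\vert\parval_i}\iff C_i\wedge\lambda_{x_{ij}}$ in both models then makes $C_i\wedge\lambda_{x_{ij}}$ both true and false \emph{at the same assignment} $n$, which is impossible. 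The same clash rules out a model of $g$ with two group variables simultaneously true, as that would force two distinct $\lambda_{x_{i\cdot}}$ true in a model of $f$, violating (\ref{eq:indicator2}). Hence $g$ fixes each group variable, so $\scope(\alpha)$ is the whole group and $f_\alpha$ equals one of the one-hot-or-all-false patterns, i.e.\ $f_\alpha=f_{x_{ij}\vert\parval_i}$ for a unique $j\in\{0,\ldots,k_i\}$.

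I expect the factorization step to be the main obstacle: turning ``the circuit respects the vtree and $\vnode_{x_i\vert\parval_i}$ gathers the group'' into a clean decomposition $f=\bigvee_t h_t\wedge g_t$ in which the $g_t$ are exactly the node functions at the cut, and in which every $h_t$ is satisfiable, is the part that needs the structured-decomposability bookkeeping and the no-$\mathfalse$-leaf assumption. Once that correspondence is in place, the logical contradiction using the biconditionals and the exactly-one clauses is immediate, and it simultaneously delivers both that $f_\alpha$ is a single pattern and that its scope is the entire group.
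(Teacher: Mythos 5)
Your proposal is correct and follows essentially the same route as the paper: both use the no-$\mathfalse$-leaf normalization to obtain a satisfiable companion function ($f'$ in the paper, your $h_t$) over the variables outside $\scope(\vnode_{x_i\vert\parval_i})$ with $f'\wedge f_\alpha\models f$, fix a model of it, and let the biconditionals (\ref{eq:parameter2}) together with the exactly-one clauses (\ref{eq:indicator2}) force the value of every $\theta_{x_{ij}\vert\parval_i}$, so that $f_\alpha$ must equal a single pattern $f_{x_{ij}\vert\parval_i}$, $j\in\{0,\ldots,k_i\}$. The only differences are presentational: you package the companion as a disjunctive cut decomposition and argue by contradiction, where your side claim that the cut functions are \emph{precisely} the node functions at $\vnode_{x_i\vert\parval_i}$ is stronger than needed (and not quite exact, since nodes with strictly deeper decomposition vnodes can also feed the cut), whereas the paper takes $f'$ directly as a conjunction of the functions of other nodes and reads the forced pattern off the fixed model by case analysis.
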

\begin{proof}
  Since $\alpha$ is a node in the st-d-DNNF representing $f$, there exists Boolean function $f^\prime$ on variable set $\vset\setminus\scope(\vnode_{x_{ij}\vert\parval_i})$ such that $f^\prime\wedge f_\alpha\models f$, where $f^\prime$ is a conjunction of the represented Boolean functions of other nodes; this is because the Boolean function represented by an NNF is defined as the conjunctions and disjunctions of the functions represented by the nodes.
  Here, $f^\prime\neq\mathfalse$ because the st-d-DNNF has no $\mathfalse$ leaf.
  Let $a^\prime$ be a model of $f^\prime$ on variables $\vset\setminus\scope(\vnode_{x_{ij}\vert\parval_i})$.
  From Eq.~(\ref{eq:indicator2}), exactly one variable among $\lambda_{x_{i1}},\ldots,\lambda_{x_{ik_i}}$ is set to $\mathtrue$ under $a^\prime$.
  If there exists $u_{ij}\in\parval_i$ such that $a^\prime(\lambda_{u_{ij}})=\mathfalse$, all $\theta_{x_{ij}\vert\parval_i}$ must be $\mathfalse$ due to (\ref{eq:parameter2}), meaning that $f_\alpha=f_{x_{i0}\vert\parval_i}$.
  Otherwise, let $\lambda_{x_{ij}}$ be the only variable where $a^\prime(\lambda_{x_{ij}})=\mathtrue$ among $\lambda_{x_{i1}},\ldots,\lambda_{x_{ik_i}}$.
  Then again due to (\ref{eq:parameter2}), $\theta_{x_{ij}\vert\parval_i}$ must be $\mathtrue$, and the other $\theta_{x_{ij^\prime}\vert\parval_i}$s must be $\mathfalse$, meaning that $f_\alpha=f_{x_{ij}\vert\parval_i}$.
\end{proof}

Here, by letting $f_j\coloneqq f_{x_{ij}\vert\parval_i}$, we have $W_{f_0}=1$ and $W_{f_j}=P_{\theta_{x_{ij}\vert\parval_i}}$, leading to
\begin{align}
  & \covariance{W_{f_j}}{W_{f_{j^\prime}}} \nonumber \\
  & =
  \begin{cases}
    0 & (j=0\text{ or }j^\prime=0) \\
    \variance{\prob(x_{ij}\vert\parval_i)} & (j=j^\prime\neq 0) \\
    \covariance{\prob(x_{ij}\vert\parval_i)}{\prob(x_{ij^\prime}\vert\parval_i)} & (\text{otherwise})
  \end{cases}.\raisetag{14pt}\label{eq:multicov}
\end{align}

\begin{figure*}[!t]
    \centering
    \includegraphics[keepaspectratio,scale=0.95]{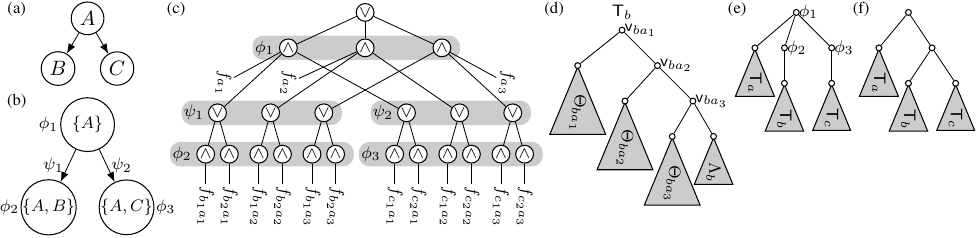}
    \caption{(a) A Bayesian network. (b) A jointree for (a). (c) A d-DNNF built by the algorithm of~\citet{darwiche03ac} with jointree of (b). (d) A vtree $\vtree_b$. (e) Whole vtree before enforcing that every internal vnode has exactly two child vnodes. (f) Whole vtree after enforcing that every internal vnode has exactly two child vnodes.}
    \label{fig:variableelim}
\end{figure*}

Using Observation~\ref{obs:singleton} and (\ref{eq:multicov}), we can modify Algorithm~\ref{alg:prod} to compute the covariance of the WMCs under this situation.
Between lines~10 and 11, we check whether $\lcanode=\vnode_{x_i\vert\parval_i}$ for some $i$ and $\parval_i$.
If so, we have $\dnode(\alpha)=\dnode(\beta)=\vnode_{x_i\vert\parval_i}$, and thus we compute $j,j^\prime\in\{0,\ldots,i_k\}$ such that $f_\alpha=f_{x_{ij}\vert\parval_i}$ and $f_\beta=f_{x_{ij^\prime}\vert\parval_i}$.
Note that this step can be performed in linear time in the sizes of $\alpha,\beta$ by scanning the descendants of $\alpha,\beta$ and investigating which $\theta_{x_{ij}\vert\parval_i}$ is set to $\mathtrue$.
Then, we assign the covariance value computed by (\ref{eq:multicov}) to $\resvar$.
Thus, the time complexity bound in Theorem~\ref{thm:tractable} also holds under this situation.

We must mention here that the adjustments of variable sets in Algorithm~\ref{alg:preprocess} also work accurately under this situation.
Since now $P_{\theta}$ and $P_{\theta^\prime}$ have a non-zero covariance, Algorithm~\ref{alg:preprocess} cannot correctly compute $\variance{W_{\mathtrue}^{\vset^\prime}}$ for $\vset^\prime\subseteq\vset$ that contains some parameter variables.
In other words, $\vvarvar[\vnode]$ and $\adjvarvar[\wnode,\vnode]$ do not correctly store the value of $\variance{W_{\mathtrue}^{\scope(\vnode)}}$ and $\variance{W_{\mathtrue}^{\scope(\wnode)\setminus\scope(\vnode)}}$ when the vertex sets $\scope(\vnode)$ and $\scope(\wnode)\setminus\scope(\vnode)$ contain parameter variables.
However, adjustments using such values do not occur, as we shown below.
An adjustment of variables from $\scope(\vnode)$ to $\scope(\wnode)$ occurs when child node $\alpha$ of an internal node, or a node $\alpha$ itself when $\lcanode$ is neither $\dnode(\alpha)$ nor $\dnode(\beta)$ (lines~5--10 in Algorithm~\ref{alg:prod}), represents the Boolean function $f_\alpha\wedge\mathtrue^{\vset^\prime}$, where $\vset^\prime\coloneqq\scope(\wnode)\setminus\scope(\vnode)$.
With the same argument as the proof of Observation~\ref{obs:singleton}, Boolean function $f^\prime\neq\mathfalse$ exists on variables $\vset\setminus(\vset^\prime\cup\scope(\dnode(\alpha)))$ such that $f^\prime\wedge f_\alpha\wedge\mathtrue^{\vset^\prime}\models f$.
This means that, under any models of $f^\prime\wedge f_\alpha$, the values of the variables in $\vset^\prime$ can be set arbitrarily.
In contrast, according to Observation~\ref{obs:singleton}, every parameter variable must be set to either $\mathtrue$ or $\mathfalse$, depending on the assignment of the indicator variables.
Thus, $\vset^\prime$ never contains parameter variables, which concludes the correctness of the adjustments when executing Algorithm~\ref{alg:prod} under this situation.

\subsection{Proof of Complexity}
Now we show that, even under Condition~\ref{cond:vtree}, we can have a polynomial-sized st-d-DNNF representing $f$ built in polynomial time for a Bayesian network with a constant treewidth.
\begin{proposition}
  \label{prop:bayesian}
  Given a Bayesian network with a constant treewidth, there exists a polynomial-sized st-d-DNNF of $f$ respecting the vtree that satisfies Condition~\ref{cond:vtree}.
  This st-d-DNNF can be built in polynomial time.
\end{proposition}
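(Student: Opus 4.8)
The plan is to construct both the vtree and the st-d-DNNF from a tree decomposition of the network, following the stages drawn in Fig.~\ref{fig:variableelim}. First I would take a jointree $T$ of the moralized network whose width equals the treewidth $w$ (Fig.~\ref{fig:variableelim}(b)); as $w$ is constant, every cluster holds $O(1)$ random variables, and each family $\{X_i\}\cup\parset_i$ — a clique in the moral graph — sits inside some cluster. Since the clauses (\ref{eq:indicator2}) and (\ref{eq:parameter2}) couple only variables of a single family, compiling $f$ along $T$ with the algorithm of \citet{darwiche03ac} (Fig.~\ref{fig:variableelim}(c)) is the standard route to a deterministic, decomposable circuit of polynomial size whose $\wedge$-decompositions all follow the rooted structure of $T$.

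Next I would build a vtree that satisfies Condition~\ref{cond:vtree}. For each family and each parent configuration $\parval_i$ I gather the parameter variables $\theta_{x_{i1}\vert\parval_i},\ldots,\theta_{x_{ik_i}\vert\parval_i}$ into one local subtree $\vtree_{x_i\vert\parval_i}$ (the gadget $\vtree_b$ of Fig.~\ref{fig:variableelim}(d)); I then assemble these gadgets and the indicator leaves into a skeleton whose coarse partition of the variables follows the rooted edges of $T$ (Fig.~\ref{fig:variableelim}(e)); finally I binarize so that every internal vnode has exactly two children (Fig.~\ref{fig:variableelim}(f)), which inflates the tree by only a constant factor. By construction $\scope(\vnode_{x_i\vert\parval_i})=\{\theta_{x_{i1}\vert\parval_i},\ldots,\theta_{x_{ik_i}\vert\parval_i}\}$, so Condition~\ref{cond:vtree} holds.

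The crux — and the step I expect to be the main obstacle — is to show that gathering the parameter variables into these subtrees is compatible with structured decomposability \emph{and} does not blow up the size. The decisive fact is that each block $\{\theta_{x_{ij}\vert\parval_i}\}_j$ is local and almost determined: by (\ref{eq:parameter2}) every $\theta_{x_{ij}\vert\parval_i}$ is fixed once the family indicators are fixed, and at most one of them is true, so $f$ restricted to this block takes only $k_i+1$ forms — one per value $j$ of $X_i$, plus the case where $\parval_i$ is unmatched — which is precisely Observation~\ref{obs:singleton}. Hence pulling a block into $\vtree_{x_i\vert\parval_i}$ multiplies the number of contexts at the affected vnodes by only $O(k_i)$, and the effective width of the whole vtree stays $O(w+\max_i k_i)$, i.e.\ polynomial. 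I would make this rigorous by a context-counting argument over the vnodes of the binarized vtree, charging each family gadget locally, and conclude that $f$ admits a polynomial-sized st-d-DNNF respecting this vtree (obtained either by compiling $f$ against the bounded-width vtree or by checking that the circuit of Fig.~\ref{fig:variableelim}(c) respects it).

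Finally, every ingredient is polynomial-time: a width-$w$ tree decomposition with constant $w$ is found in linear time, the jointree compilation and all three vtree stages run in time polynomial in the circuit and vtree, so the whole construction is polynomial, proving Proposition~\ref{prop:bayesian}.
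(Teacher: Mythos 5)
Your proposal follows essentially the same route as the paper: a jointree whose width equals the (constant) treewidth, the compilation of \citet{darwiche03ac} into a d-DNNF, per-family vtree gadgets gathering each parameter block $\Theta_{x_i\parval_i}$ together with the indicators $\Lambda_{x_i}$, attachment of these gadgets to the jointree skeleton, and a final binarization of both vtree and circuit. The one step you flag as the ``crux'' and resolve via a sketched context-counting/width argument is in fact unnecessary: in the compiled circuit each family's variables occur only in the full-conjunction leaves $f_{x_{ij}\parval_i}$, which fix \emph{every} $\theta_{x_{ij^\prime}\vert\parval_i^\prime}$ (over all parent patterns, via $f_{x_{i0}\vert\parval_i^\prime}$ for the unmatched ones) and every indicator of $X_i$ at a single jnode, so the circuit respects the constructed vtree by inspection---which is exactly your parenthetical fallback and the paper's actual argument.
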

\citet{darwiche03ac} provided an algorithm to encode $f$ as an arithmetic circuit whose size is polynomially bounded for a Bayesian network with a constant treewidth.
Indeed, this arithmetic circuit can be easily converted to a d-DNNF that represents $f$ by adding negative literals that are not mentioned and replacing addition nodes with $\vee$ and multiplication nodes with $\wedge$.
We show that this converted d-DNNF respects a vtree satisfying Condition~\ref{cond:vtree}.

\begin{proof}
For the sake of completeness, we describe the algorithm of \citet{darwiche03ac} as a d-DNNF compilation algorithm.
For a Bayesian network on variables $\bnset\coloneqq\{X_1,\ldots,X_n\}$, a \emph{jointree} is a rooted node-labeled tree where each label is a subset of $\bnset$.
To distinguish from NNF nodes and vnodes, we respectively call the nodes and edges of jointrees \emph{jnodes} and \emph{jedges}.
The label of jnode $\phi$ is denoted by $\labelset(\phi)$, and label $\labelset(\psi)$ of jedge $\psi$ is defined as the intersection of the labels of the endpoints.
The labeling must satisfy the following conditions: (i) for every $X_i$, $\{X_i\}\cup\parset_i$ must appear in some label, and (ii) for every $X_i$, the jnodes containing $X_i$ as a label are connected.
For example, Fig.~\ref{fig:variableelim}(b) is a jointree for the Bayesian network of Fig.~\ref{fig:variableelim}(a).

Given a jointree, we can construct the d-DNNF of $f$ as follows.
We prepare the following nodes: root $\vee$-node $\alpha$, $\wedge$-node $\beta(\phi,\labelval_\phi)$ for every jnode $\phi$ and pattern (set) $\labelval_\phi$ on the values of $\labelset(\phi)$, and $\vee$-node $\alpha(\psi,\labelval_\psi)$ for every jedge $\psi$ and pattern (set) $\labelval_\psi$ on the values of $\labelset(\psi)$.
The child nodes of root node $\alpha$ are all $\beta(\phi_r,\labelval_{\phi_r})$s, where $\phi_r$ is the root jnode.
The child nodes of $\beta(\phi,\labelval_\phi)$ are $\alpha(\psi,\labelval_\psi)$, satisfying that $\psi$ is a jedge connecting $\phi$ and its child jnode and $\labelval_\psi\subseteq\labelval_\phi$.
The child nodes of $\alpha(\psi,\labelval_\psi)$ are $\beta(\phi,\labelval_\phi)$, satisfying that $\phi$ is the descendant endpoint of $\psi$ and $\labelval_\phi\supseteq\labelval_\psi$.
Moreover, for every $X_i\in\bnset$, we choose jnode $\phi_i$, satisfying $\labelset(\phi_i)=\{X_i\}\cup\parset_i$, and every $\alpha(\phi_i,\labelval_{\phi_i})$ with $\labelval_{\phi_i}=\{x_{ij}\}\cup\parval_i$ also has a child node representing Boolean function $f_{x_{ij}\parval_i}$ defined as:
\begin{align*}
  f_{x_{ij}\parval_i}\!\coloneqq\!f_{x_{ij}\vert\parval_i}\!\wedge\!\left(\bigwedge_{\parval_i^\prime\neq\parval_i}f_{x_{i0}\vert\parval_i^\prime}\right)\!\wedge\!\lambda_{x_{ij}}\!\wedge\!\left(\bigwedge_{j^\prime\neq j}\neg\lambda_{x_{ij^\prime}}\right).
\end{align*}
In other words, by letting $\Theta_{x_i\parval_i}\coloneqq\{\theta_{x_{ij}\vert\parval_i}\ \vert\ j\}$ and $\Lambda_{x_i}\coloneqq\{\lambda_{x_{ij}}\ \vert\ j\}$, $f_{x_{ij}\parval_i}$ is a Boolean function on variables $(\bigcup_{\parval_i}\Theta_{x_i\parval_i})\cup\Lambda_i$, where $\theta_{x_{ij}\vert\parval_i}$ and $\lambda_{x_{ij}}$ are set to $\mathtrue$ and all the other variables are set to $\mathfalse$.
The result of \citet{darwiche03ac} shows that, with an appropriately designed jointree, the compiled d-DNNF represents $f$ and the size is polynomial when the treewidth of the input Bayesian network is constant.
Fig.~\ref{fig:variableelim}(c) depicts the d-DNNF of $f$ constructed by the above algorithm when the jointree of Fig.~\ref{fig:variableelim}(b) is given.
In this example, $A$ takes three values, $a_1,a_2,a_3$, and $B,C$ take two, $b_1,b_2$ and $c_1,c_2$.
Here, for example, $f_{a_1}=\lambda_{a_1}\wedge\neg\lambda_{a_2}\wedge\neg\lambda_{a_3}$ and $f_{b_2a_1}=f_{b_2\vert a_1}\wedge f_{b_0\vert a_2}\wedge f_{b_0\vert a_3}\wedge\lambda_{b_2}\wedge\neg\lambda_{b_1}$, where $f_{b_2\vert a_1}=\theta_{b_2\vert a_1}\wedge\neg\theta_{b_1\vert a_1}$ and $f_{b_0\vert a_i}=\neg\theta_{b_1\vert a_i}\wedge\neg\theta_{b_2\vert a_i}$.
It is clear from the above construction that this d-DNNF can be built in linear time in the size of it.
Since it is shown by~\citep{darwiche03ac} that the size of this d-DNNF is polynomial for a Bayesian network with a constant treewidth, its construction takes polynomial time.

Now we construct a vtree that is respcted by the compiled d-DNNF.
For $X_i\in\bnset$, since every $f_{x_{ij}\parval_i}$ is simply a conjunction of literals, it respects the following vtree $\vtree_{x_i}$: all the patterns (sets) on the parent values are ordered arbitrarily as $\parval_i^1,\ldots,\parval_i^N$, and we prepare internal nodes $\vnode_{x_i\parval_i^1},\ldots,\vnode_{x_i\parval_i^N}$ such that (i) the subtree rooted at $\vnode_{x_i\parval_i^j}$'s left child contains all the variables in $\Theta_{x_i\parval_i^j}$ and (ii) $\vnode_{x_i\parval_i^j}$'s right child is $\vnode_{x_i\parval_i^{j+1}}$, except that the subtree rooted at $\vnode_{x_i\parval_i^N}$'s right child contains all the variables in $\Lambda_{x_i}$.
This vtree $\vtree_{x_i}$, rooted at $\vnode_{x_i\parval_i^1}$, satisfies Condition~\ref{cond:vtree} for this $i$; we can take $\vnode_{x_i\vert\parval_i}$ in Condition~\ref{cond:vtree} as the left child of $\vnode_{x_i\parval_i}$.
Fig.~\ref{fig:variableelim}(d) shows vtree $\vtree_b$ for random variable $B$.
The triangle labeled $\Theta_{ba_i}$ is a sub-vtree containing $\Theta_{ba_i}=\{\theta_{b_1\vert a_i},\theta_{b_2\vert a_i}\}$ as the leaf labels.
In the same manner, the triangle labeled $\Lambda_b$ contains $\Lambda_b=\{\lambda_{b_1},\lambda_{b_2}\}$ as leaf labels.

A whole vtree can be built by attaching each $\vtree_{x_i}$ to the jointree.
Specifically, we attach the root vnode of $\vtree_{x_i}$ as a child node of jnode $\phi_i$ chosen by the above algorithm.
From the description of the compile algorithm, every $\wedge$-node in the compiled d-DNNF clearly decomposes variables according to the built vtree.
By enforcing that every vnode has exactly two child vnodes by chaining the vnodes and also enforcing that every $\wedge$-node has exactly two child nodes in the same manner, we now have a polynomial-sized structured d-DNNF and a vtree satisfying Condition~\ref{cond:vtree}.
Note that enforcing two child nodes only doubles the size.
Fig.~\ref{fig:variableelim}(e) shows the jointree after attaching $\vtree_a,\vtree_b,\vtree_c$.
The decomposition in the d-DNNF of Fig.~\ref{fig:variableelim}(c) clearly respects the (v)tree in Fig.~\ref{fig:variableelim}(e).
To enforce that every $\wedge$-node has two child nodes, we double the nodes corresponding to $\phi_1$ and eliminate those corresponding to $\phi_2$ and $\phi_3$.
The vtree in Fig.~\ref{fig:variableelim}(e) is also transformed in the same way, resulting in the vtree in Fig.~\ref{fig:variableelim}(f).
Then, we obtain the final structured d-DNNF respecting the vtree that satisfies Condition~\ref{cond:vtree}.
\end{proof}

This constitutes the proof for Theorem~\ref{thm:bayesian}.
\begin{proof}[Proof of Theorem~\ref{thm:bayesian}]
  Given a Bayesian network with a constant treewidth, by Proposition~\ref{prop:bayesian}, we can build a polynomial-sized st-d-DNNF of $f$ in polynomial time.
  We can compute the variance of the marginal probability with our proposed algorithm and the built st-d-DNNF of $f$ by method (ii).
  Here, the modification of the procedure around the treatment of $\vnode_{x_i\vert\parval_i}$s are needed.
  By Theorem~\ref{thm:tractable}, the overall running time is polynomial.
\end{proof}

Note that we can prove Theorem~\ref{thm:bayesian} even when under method (i).
For method (i), we should construct the st-d-DNNF of $f\wedge g_\evidence$ instead of $f$.
Observe that $f\wedge g_\evidence$ is the Boolean function obtained by substituting $\lambda_{x_{ij}}$ with $\mathfalse$ for $x_{ij}$ such that $\evidence$ contains $x_{ij^\prime}$ $(j^\prime\neq j)$.
This can be done by replacing each leaf $\lambda_{x_{ij}}$ such that $\evidence$ contains $x_{ij^\prime}$ $(j^\prime\neq j)$ with $\mathfalse$.
The resulting st-d-DNNF obviously respects the vtree constructed in the proof of Proposition~\ref{prop:bayesian} since it is respected by the original st-d-DNNF of $f$.
Moreover, since $f\wedge g_\evidence$ is obtained by assigning some indicator variables of $f$ to $\mathfalse$, Observation~\ref{obs:singleton} holds even if $f$ is replaced with $f\wedge g_\evidence$.
Thus, by eliminating $\mathfalse$ child as described above, we can use the modified version of Algorithm~\ref{alg:prod} that uses (\ref{eq:multicov}) to compute the variance of WMC of $f\wedge g_\evidence$.
Since eliminating $\mathfalse$ child does not increase the st-d-DNNF size, the resulting st-d-DNNF's size is polynomial if the Bayesian network has a constant treewidth.

The tractability of method (i) indicates the possibility of approximated computation for the variance of the conditional probability in a Bayesian network with non-binary random variables in the same way as described in the main text.
Since the conditional probability of $\evidence$ given $\condit$ is $W_{h^\prime}/W_{h}$ with $h^\prime=f\wedge g_\evidence\wedge g_\condit$ and $h=f\wedge g_\condit$, we can approximately compute its variance with the values of $\variance{W_h}$, $\variance{W_{h^\prime}}$, and $\covariance{W_{h^\prime}}{W_h}$, as well as the expectations.
$\covariance{W_{h^\prime}}{W_h}$ can be computed with the modified version of Algorithm~\ref{alg:prod} because Observation~\ref{obs:singleton} holds even when $f$ is replaced with $h$ or $h^\prime$.

\section{Application for Network Reliability}
\label{app:network}
In this appendix, we introduce an application for a network reliability analysis.
We describe a definition of \emph{$K$-terminal network reliability}~\citep{hardy2007knr}, which subsumes two well-known reliability measures: two-terminal network reliability~\citep{aboelfotoh89} and all-terminal network reliability~\citep{won10ATR}.
We are given undirected graph $G=(V,E)$ and probability $p_e$ of the presence of every edge $e\in E$.
We consider a probabilistic graph where each edge $e$ is present with probability $p_e$ and absent with probability $1-p_e$ independent of the other edges' presence or absence.
Given $T\subseteq V$, called a \emph{terminal set}, network reliability $R_T$ is the probability that all the vertices in $T$ are connected, i.e., all the vertices in $T$ are in the same connected component, in the given probabilistic graph.

In an ordinal setting where probability $p_e$ is precisely given, we can compute $R_T$ by WMC as follows.
Let $x_e\ (e\in E)$ be a binary variable where $x_e=\mathtrue$ if $e$ is present and $x_e=\mathfalse$ if $e$ is absent.
Then we consider Boolean function $f_T$ on variable set $\{x_e\mid e\in E\}$ that evaluates to $\mathtrue$ if and only if the vertices in $T$ are connected on a graph $(G,E^\prime)$ where $E^\prime\coloneqq\{e\in E\mid x_e=\mathtrue\}$.
In other words, $f_T$ evaluates to $\mathtrue$ if and only if $T$ is connected in the subgraph induced by the present edges.
In addition, we let $P_{x_e}=1-N_{x_e}=p_e$ for every $e\in E$.
Then the WMC of $f_T$ equals $R_T$.

\citet{nakamura22variance} extended the network reliability evaluation so that every $p_e$ value has uncertainty and the uncertainty degree of $R_T$ is computed.
Particularly, they focused on a situation where $p_e$ is also a random variable with mean $\mu_e$ and variance $\sigma_e^2$ and proposed an algorithm to compute the variance of $R_T$.
Note that $p_e$ and $p_{e^\prime}$ $(e\neq e^\prime)$ are assumed to be independent in~\citep{nakamura22variance} to ensure that each link's presence or absence is independent of the other links' one.
Their algorithm first constructs an OBDD representing $f_T$ by the algorithm of \citet{hardy2007knr} and computes the variance of $R_T$ with the built OBDD.
Here, the size of the built OBDD of $f_T$ is polynomial when the graph's \emph{pathwidth} is constant and their variance computation runs in polynomial time in the size of OBDD.
Thus, their algorithm can compute the variance of the network reliability in polynomial time for a graph with a constant pathwidth.

Although their paper~\citep{nakamura22variance} did not mention WMC, we find that their algorithm can be extended to compute the variance of a general WMC.
Particularly, we can recover their situation by setting $\mu_{P_x}=1-\mu_{N_x}=\mu_e$ and $\sigma_{P_x}^2=\sigma_{N_x}^2=-\sigma_{P_xN_x}=\sigma_e^2$ for $x=x_e\ (e\in E)$.
Here, the assumption that $(P_x,N_x)$ and $(P_y,N_y)$ $(x\neq y)$ are independent is justified by the assumption that $p_e$ and $p_{e^\prime}$ $(e\neq e^\prime)$ are independent.
We also extend their algorithm to handle structured d-DNNFs, which are a strict superset of OBDDs.
Our theoretical contribution for the variance computation of network reliability is summarized in the following theorem.
\begin{theorem}
  \label{thm:networkreliability}
  We are given graph $G$, the mean and the variance of probability $p_e$ of every $e\in E$, and terminals $T\subseteq V$.
  Then we can compute the variance of network reliability $R_T$ in polynomial time when $G$'s treewidth is constant.
\end{theorem}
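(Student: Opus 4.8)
The plan is to reduce the problem to st-d-DNNF compilation and then invoke Theorem~\ref{thm:tractable}. As the appendix already sets up, choosing $\mu_{P_x}=1-\mu_{N_x}=\mu_e$ and $\sigma_{P_x}^2=\sigma_{N_x}^2=-\sigma_{P_xN_x}=\sigma_e^2$ for $x=x_e$ makes $W_{f_T}=R_T$, so $\variance{R_T}=\variance{W_{f_T}}$, and the independence of $(P_x,N_x)$ across distinct edges follows from the independence of the $p_e$'s. By Theorem~\ref{thm:tractable}, once we have an st-d-DNNF $\alpha$ of $f_T$, we can compute $\variance{W_{f_T}}$ in $\order{|\alpha|^2+|E|^2}$ time. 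Hence it suffices to build, in polynomial time, a polynomial-sized st-d-DNNF of $f_T$ whenever the treewidth of $G$ is constant; this is the treewidth analogue of the OBDD construction of~\citet{hardy2007knr} used for constant pathwidth.

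First I would fix a rooted nice tree decomposition $(T,\{B_t\})$ of width $w=\order{1}$ with introduce-, forget-, and join-nodes, and assign each edge $e$ to the unique node where both endpoints of $e$ first appear together. The vtree is then built to mirror $T$: the edge variables assigned to a subtree of $T$ are placed in the corresponding subtree of the vtree, binarized arbitrarily inside each bag. The st-d-DNNF is produced by the standard connectivity dynamic program along $T$: for each node $t$ I create one circuit node per reachable \emph{state}, where a state records the partition of the boundary $B_t$ into the connected components realized by the edges below $t$, together with one bit per block indicating whether that block already contains a terminal. Introduce-edge nodes branch on the edge variable as a deterministic $\vee$ of two $\wedge$-nodes (one conjoining the positive literal, one the negative) over the appropriate child states; forget nodes are $\vee$-nodes over the child states that project onto the current boundary; and join nodes are $\wedge$-nodes combining the two children, whose assigned edge sets are disjoint. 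Any transition that forgets a terminal-bearing block possessing no boundary vertex is redirected to $\mathfalse$, and the root keeps only those states in which all terminals lie in a single block.

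I would then check the four properties. Decomposability and structured decomposability hold because every edge is assigned to exactly one bag, so each $\wedge$-node combines disjoint scopes, and these scopes split exactly along the vtree derived from $T$. Determinism holds because each configuration of the edges below $t$ induces a \emph{unique} exact partition, hence lands in exactly one state, so every $\vee$ is over mutually exclusive disjuncts. For the size, the number of distinct states at a bag is bounded by a constant depending only on $w$ (at most $\mathcal{B}_{w+1}$, the Bell number of a $(w{+}1)$-element set), while $T$ has $\order{|V|+|E|}$ nodes and each contributes $\order{1}$ arcs; thus the st-d-DNNF has polynomial size and is built in polynomial time. Combined with Theorem~\ref{thm:tractable}, this yields the claimed polynomial-time variance computation.

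The hard part will be maintaining \emph{determinism} simultaneously with structured decomposability at polynomial size, which forces the states to be \emph{exact} partitions rather than ``at-least-connected'' relaxations; verifying that the join transition (taking two boundary partitions, merging across shared vertices, and re-projecting) yields a single well-defined refined partition, and that the terminal-isolation pruning is applied consistently, is the delicate bookkeeping. This mirrors the jointree-guided construction of Proposition~\ref{prop:bayesian}, where a carefully shaped vtree was shown to meet the structural constraints.
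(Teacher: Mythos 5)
Your proposal is correct in its overall reduction and reaches the theorem, but it takes a genuinely different route from the paper. The paper's proof is essentially two sentences: it observes that ``all vertices of $T$ are connected in the subgraph of present edges'' is MSO-expressible, invokes the compilation result of \citet{amarilli17icalp} to obtain a polynomial-size st-d-DNNF of $f_T$ in polynomial time on constant-treewidth graphs, and then applies Theorem~\ref{thm:tractable} exactly as you do. You instead build the st-d-DNNF explicitly by a connectivity dynamic program over a nice tree decomposition, with states given by boundary partitions decorated with terminal bits, and you verify determinism (each assignment induces a unique exact partition) and structured decomposability (the vtree mirrors the decomposition, with each edge variable placed at the unique bag it is assigned to) by hand. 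What your route buys is a self-contained argument with explicit constants (roughly $\mathcal{B}_{w+1}\cdot 2^{w+1}$ states per bag rather than the non-elementary constants hidden in MSO-based compilation); this is notable because the paper itself remarks that the \citet{amarilli17icalp} construction is impractical precisely for this reason and points to bag-based search~\citep{ishihata23bag} as a practical alternative --- your DP is essentially such a direct construction. What the paper's route buys is brevity and generality: it needs no case analysis of introduce/forget/join transitions and extends verbatim to any MSO-definable $f$.

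One concrete slip in your construction needs fixing: the pruning rule ``any transition that forgets a terminal-bearing block possessing no boundary vertex is redirected to $\mathfalse$'' is too aggressive. If the component containing \emph{all} of $T$ is completed and forgotten before the root is reached, your rule sends a satisfying assignment to $\mathfalse$, while your root condition (all terminals in a single surviving block) cannot recognize it either. The standard repair is cheap: root the decomposition at a bag containing a fixed terminal $t_0\in T$ (or add $t_0$ to every bag, increasing the width by one), so the terminal-bearing block always retains a boundary vertex until the root; alternatively carry an explicit ``done'' flag in the state. Relatedly, at degenerate nodes one child of a $\wedge$-node may have empty scope, which violates the paper's standing assumption on st-d-DNNFs, but the elimination procedure described after Definition~\ref{def:structured} removes these without affecting the size bound. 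With these repairs your proof is complete.
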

\begin{proof}
  It is known that the graph property that ``all the vertices in $T$ are connected in the subgraph $(V,E^\prime)$ of $G$'' can be described by a monadic second-order logic (MSO) on graphs.
  By the algorithm of \citet{amarilli17icalp}, we can in polynomial time construct an st-d-DNNF of polynomial size representing $f_T$ when $G$'s treewidth is constant.
  The theorem follows by applying Theorem~\ref{thm:tractable} to their built st-d-DNNF.
\end{proof}
Theorem~\ref{thm:networkreliability} theoretically improves the previous result stating that the variance of network reliability can be computed in polynomial time for constant-pathwidth graphs because the treewidth can be bounded by the pathwidth but not vice versa.

Currently, this approach is less practical because the constant hidden in the complexity of the algorithm of \citet{amarilli17icalp} is generally prohibitively large.
However, there are some practical algorithms for compiling a graph-related property into an st-d-DNNF, e.g., the bag-based search~\citep{ishihata23bag}.
We hope such algorithms can be applied to compile an st-d-DNNF representing $f_T$ and obtain a practical treewidth-bounded algorithm for the variance computation of network reliability.

\section{More Experimental Results}
\label{app:experiment}
\subsection{Detailed Results of Time Consumption}
We describe more detailed results of experiments that measured the time consumption of our proposed algorithm.
Table~\ref{tb:experiment} shows the results for the top-20 networks in descending order of the compiled SDD size.
As mentioned in the main text, the maximum size of SDD was 3,888, and the maximum variance computation time is 0.025 sec.
We roughly observe that the variance computation time increases when the SDD size becomes larger, which meets the complexity result of Theorem~\ref{thm:tractable}.
Even when the time required for compiling SDD is added to the computational time, we can compute the variance of the marginal in at most 10 sec for all the networks.

\begin{table}[tb]
  \centering
  {\footnotesize\tabcolsep=2pt
  \begin{tabular}{lrrrr}
    \toprule
    \multicolumn{1}{c}{Name} & \multicolumn{1}{c}{\#rv} & \multicolumn{1}{c}{$|\text{SDD}|$} & \multicolumn{1}{c}{Compile (s)} & \multicolumn{1}{c}{Variance (s)} \\
    \midrule
    projectmanagement & 26 & 3888 & 0.500 & 0.025 \\
    grounding & 36 & 3397 & 2.387 & 0.017 \\
    GDIpathway2 & 28 & 2755 & 0.784 & 0.021 \\
    gasifier & 40 & 2127 & 1.082 & 0.007 \\
    windturbine & 122 & 2043 & 1.380 & 0.009 \\
    engines & 12 & 1804 & 0.240 & 0.011 \\
    inverters & 29 & 1721 & 0.593 & 0.008 \\
    cng & 86 & 1626 & 1.281 & 0.005 \\
    rainstorm & 34 & 1625 & 1.398 & 0.004 \\
    GDIpathway1 & 28 & 1319 & 0.577 & 0.003 \\
    construnctionproductivity & 18 & 1008 & 0.272 & 0.003 \\
    poultry & 47 & 953 & 0.474 & 0.002 \\
    electrolysis & 16 & 918 & 0.196 & 0.002 \\
    gasexplosion & 18 & 903 & 0.189 & 0.001 \\
    algalactivity2 & 8 & 842 & 0.092 & 0.002 \\
    propellant & 49 & 710 & 9.083 & 0.001 \\
    BOPfailure2 & 46 & 706 & 0.319 & 0.001 \\
    lithium & 45 & 704 & 0.221 & 0.001 \\
    dragline & 86 & 657 & 0.324 & 0.001 \\
    vessel1 & 16 & 644 & 0.148 & 0.001 \\
    \bottomrule
  \end{tabular}
  }
  \caption{Detailed experimental results. \#rv is the number of random variables. $|\text{SDD}|$ is the size of compiled SDD. Compile and Variance indicate the time required to compile SDD and to compute variance.}
  \label{tb:experiment}
\end{table}

We used the SDD package~\citep{choi13sdd} as a compiler, although there exists another st-d-DNNF compiler: miniC2D.
\citet{oztok15minic2d} reported that miniC2D typically produces a much larger st-d-DNNF compared to that built by the SDD package, although its compilation time is often shorter than the SDD package.
They also reported that the size of the st-d-DNNF compiled by miniC2D can be further reduced by the SDD package's size reduction feature after compilation.
This combination of miniC2D and the SDD package enabled us to compile medium-sized SDDs for some Boolean functions such that the sole use of the SDD package takes a prohibitively long time.
Since the compactness of the compiled st-d-DNNF is crucial in the efficiency of the variance computation (Theorem~\ref{thm:tractable}) and all the networks can be compiled in reasonable time with the SDD package, we used the SDD package as a compiler in the experiments.
However, for more complicated networks such that the SDD package cannot compile, we can select the combination of miniC2D and the SDD package for variance computation.

We finally discuss why the SDD size remains small for these networks.
Although some networks have a bunch of parameters, many parameters inside these networks have values of either $0$ or $1$, i.e., many parameters satisfy $\prob(x_{i0}\vert\parval_i)=0$ or $1$.
For example, although the ``propellant'' network has 8,345 parameters, all except 33 have values of either $0$ nor $1$.
Typically, the parameters valued $0$ or $1$ represent conditional branches from expert knowledge, and thus they are determined regardless of the learning from data.
Ace v3.0 suppresses the parameter variables corresponding to such parameters to simplify the resulting Boolean function.
This simplification is also valid for the variance computation because these parameters' variances are $0$, and thus the absence of these parameter variables does not affect the variance of the WMC value.
This is why the compiler outputs smaller SDDs even though the Bayesian networks have many random variables.

\subsection{More Examples of Variance Computation}
We exhibit a few more examples of the variance computation of the marginal probability of a Bayesian network.
We suggest below that we can barely know how much the variance of the marginal is decreased by reducing the variance of the parameter without actually computing the variance.

\begin{figure}[tb]
  \centering
  \includegraphics[keepaspectratio]{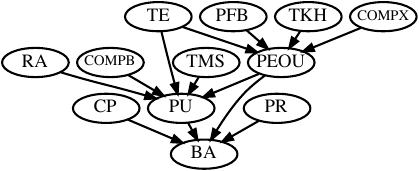}
  \caption{The ``blockchain'' network.}
  \label{fig:blockchain}
\end{figure}
\begin{table}[tbp]
  \centering
  {\footnotesize\tabcolsep=2pt
  \begin{tabular}{lr}
    \toprule
    \multicolumn{1}{c}{Parameter} & \multicolumn{1}{c}{Variance} \\
    \midrule
    $\text{PR}$ & 0.001069 \\
    $\text{BA}\vert\text{CP}_\text{High},\text{PEOU}_\text{Low},\text{PR}_\text{High},\text{PU}_\text{High}$ & 0.001236 \\
    $\text{CP}$ & 0.001243 \\
    $\text{BA}\vert\text{CP}_\text{High},\text{PEOU}_\text{High},\text{PR}_\text{High},\text{PU}_\text{High}$ & 0.001374 \\
    $\text{BA}\vert\text{CP}_\text{High},\text{PEOU}_\text{Low},\text{PR}_\text{Low},\text{PU}_\text{High}$ & 0.001410 \\
    $\text{BA}\vert\text{CP}_\text{High},\text{PEOU}_\text{Low},\text{PR}_\text{High},\text{PU}_\text{Low}$ & 0.001414 \\
    $\text{BA}\vert\text{CP}_\text{High},\text{PEOU}_\text{High},\text{PR}_\text{Low},\text{PU}_\text{High}$ & 0.001424 \\
    $\text{BA}\vert\text{CP}_\text{High},\text{PEOU}_\text{Low},\text{PR}_\text{Low},\text{PU}_\text{Low}$ & 0.001432 \\
    $\text{BA}\vert\text{CP}_\text{Low},\text{PEOU}_\text{High},\text{PR}_\text{High},\text{PU}_\text{High}$ & 0.001442 \\
    $\text{TMS}$ & 0.001454 \\
    \midrule
    (none) & 0.001482 \\
    \bottomrule
  \end{tabular}
  }
  \caption{Variance of $\prob(\text{BA}\!=\!\text{Low})$ when one parameter's variance is reduced to one-tenth. Top-10 parameters in ascending order of variance are shown.}
  \label{tb:showcase2}
\end{table}

One example is the ``blockchain'' network from bnRep, shown in Fig.~\ref{fig:blockchain}.
This network has 12 random variables, each of which is valued either Low or High.
With the same setting of variances, i.e., $\sigma^2=p(1-p)/10$ for the probability parameters with value $p$, the mean and variance of $\prob(\text{BA}=\text{Low})$ are computed as 0.8985 and 0.001482.
Then, we demonstrated how much the variance of the marginal is decreased by reducing the variance of one parameter to one-tenth.
The ``blockchain'' network has 48 parameters whose value is neither $0$ nor $1$, and we conducted the above analysis for each one.
Table~\ref{tb:showcase2} shows the top-10 parameters in the reduction of the variance of $\prob(\text{BA}=\text{Low})$.
Unlike the results for the ``algalactivity2'' network in the main text, most of the parameters in Table~\ref{tb:showcase2} are either the conditional probability of $\text{BA}$ or the probability of the random variables on which $\text{BA}$ directly depends ($\text{PR}$ and $\text{CP}$).
Although this result follows our intuition, we already observed with the ``algalactivity2'' network that it is not always the case.
Thus, we here assert that we cannot obtain parameters having greater impact on the variance of the marginal and evaluate the degree of such an impact without actually computing the variances.

\begin{figure}[tb]
  \centering
    \includegraphics[keepaspectratio]{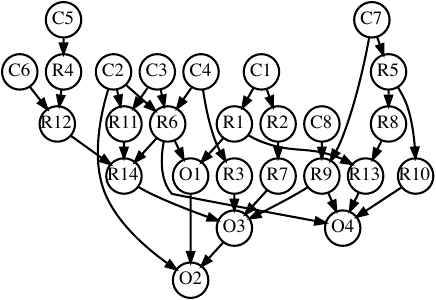}
    \caption{The ``projectmanagement'' network.}
    \label{fig:projectmanagement}
\end{figure}
\begin{figure}
  \begin{minipage}{0.48\columnwidth}
      \centering
      {\footnotesize\tabcolsep=2pt
      \begin{tabular}{lr}
        \toprule
        \multicolumn{1}{c}{Parameter} & \multicolumn{1}{c}{Variance} \\
        \midrule
        $\text{O2}\vert\text{C2}_\text{NO},\text{O1}_\text{NO},\text{O3}_\text{YES}$ & 0.006075 \\
        $\text{C2}$ & 0.006753 \\
        $\text{O2}\vert\text{C2}_\text{YES},\text{O1}_\text{NO},\text{O3}_\text{YES}$ & 0.006809 \\
        $\text{O2}\vert\text{C2}_\text{YES},\text{O1}_\text{YES},\text{O3}_\text{YES}$ & 0.007523 \\
        $\text{O1}\vert\text{R1}_\text{NO},\text{R6}_\text{YES}$ & 0.007796 \\
        $\text{C1}$ & 0.007859 \\
        $\text{R1}\vert\text{C1}_\text{YES}$ & 0.007869 \\
        $\text{O1}\vert\text{R1}_\text{NO},\text{R6}_\text{NO}$ & 0.008209 \\
        $\text{C4}$ & 0.008222 \\
        $\text{R1}\vert\text{C1}_\text{NO}$ & 0.008227 \\
        \midrule
        (none) & 0.008313 \\
        \bottomrule
      \end{tabular}
      }
      \captionof{table}{Variance of $\prob(\text{O2}\!=\!\text{YES})$ when one parameter's variance is reduced to one-tenth. Top-10 parameters in ascending order of variance are shown.}
      \label{tb:showcase3}
  \end{minipage}%
  \hspace{0.03\columnwidth}
  \begin{minipage}{0.48\columnwidth}
      \centering
      {\footnotesize\tabcolsep=2pt
      \begin{tabular}{lr}
        \toprule
        \multicolumn{1}{c}{Parameter} & \multicolumn{1}{c}{Variance} \\
        \midrule
        
        $\text{C7}$ & 0.003732 \\
        $\text{C8}$ & 0.004033 \\
        $\text{O4}\vert\text{R10}_\text{NO},\text{R13}_\text{NO},\text{R6}_\text{YES},\text{R9}_\text{NO}$ & 0.004176 \\
        $\text{C2}$ & 0.004203 \\
        $\text{R10}\vert\text{R5}_\text{NO}$ & 0.004253 \\
        $\text{R9}\vert\text{C7}_\text{NO},\text{C8}_\text{YES}$ & 0.004317 \\
        $\text{C3}$ & 0.004339 \\
        $\text{C4}$ & 0.004394 \\
        $\text{O4}\vert\text{R10}_\text{YES},\text{R13}_\text{NO},\text{R6}_\text{YES},\text{R9}_\text{NO}$ & 0.004416 \\
        $\text{O4}\vert\text{R10}_\text{YES},\text{R13}_\text{NO},\text{R6}_\text{NO},\text{R9}_\text{NO}$ & 0.004418 \\
        \midrule
        (none) & 0.004547 \\
        \bottomrule
      \end{tabular}
      }
      \captionof{table}{Variance of $\prob(\text{O4}\!=\!\text{YES})$ when one parameter's variance is reduced to one-tenth. Top-10 parameters in ascending order of variance are shown.}
      \label{tb:showcase4}
  \end{minipage}%
\end{figure}

Another example is the ``projectmanagement'' network from bnRep, shown in Fig.~\ref{fig:projectmanagement}.
This network has 26 random variables, each of which is valued either YES or NO, and 100 independent parameters.
We conducted the same analyses on two marginal probabilities, $\prob(\text{O2}=\text{YES})$ and $\prob(\text{O4}=\text{YES})$.
When every parameter's variance is set to $p(1-p)/10$, the mean and variance of $\prob(\text{O2}=\text{YES})$ are 0.4493 and 0.008313 and those of $\prob(\text{O4}=\text{YES})$ are 0.3337 and 0.004547.
Tables~\ref{tb:showcase3} and \ref{tb:showcase4} list the top-10 parameters in the reduction of the variance of $\prob(\text{O2}=\text{YES})$ and $\prob(\text{O4}=\text{YES})$.
We observe that the parameters in Tables~\ref{tb:showcase3} and \ref{tb:showcase4} are substantially different.
Notably, although both O2 and O4 are dependent on C1 and C3, C1 only appears in Table~\ref{tb:showcase3}, while C3 only appears in Table~\ref{tb:showcase4}.
These results again indicate that we cannot find these parameters without actually computing the variance of the marginal.

\end{document}